\definecolor{babyblueeyes}{rgb}{0.63, 0.79, 0.95}
\definecolor{blond}{rgb}{0.98, 0.94, 0.75}
\newcolumntype{s}{>{\columncolor{lightgray}} p{3cm}}
\DeclareMathOperator{\GD}{GD}
\DeclareMathOperator{\GV}{\bar{v}}
\DeclareMathOperator{\GR}{\bar{r}}
\DeclareMathOperator{\GW}{\bar{w}}
\DeclareMathOperator{\BV}{{\color{blue}v}}
\DeclareMathOperator{\BR}{{\color{blue}r}}
\newcommand{\citep}[1]{\cite{#1}}
\newcommand*{\ie}{i.e.\@\xspace}
\newcommand{\factorgnns}{\textsc{ReFactor GNNs}\xspace}
\newcommand{\factorgnn}{\textsc{ReFactor GNN}\xspace}
\theoremstyle{plain}
\newtheorem{theorem}{Theorem}[section]
\newtheorem{lemma}[theorem]{Lemma}
\theoremstyle{definition}
\theoremstyle{remark}
\title{\factorgnns: Revisiting Factorisation-based Models from a Message-Passing Perspective}
\author{
Yihong Chen$^{\text{\Aries \Taurus}}$ \qquad Pushkar Mishra$^{\text{\Taurus}}$ \qquad Luca Franceschi$^{\text{\Gemini}}$\thanks{Work done while at UCL.} 
\qquad Pasquale Minervini$^{\text{\Aries \Libra*}}$ \\ \bf Pontus Stenetorp$^{\text{\Aries}}$ \qquad Sebastian Riedel$^{\text{\Aries \Taurus}}$ \\
$^{\text{\Aries}}$UCL Centre for Artificial Intelligence, London, United Kingdom \\
$^{\text{\Taurus}}$Meta AI, London, United Kingdom \\
$^{\text{\Gemini}}$Amazon Web Services, Berlin, Germany \\
$^{\text{\Libra}}$School of Informatics, University of Edinburgh, Edinburgh, United Kingdom \\
\texttt{\{yihong.chen, p.stenetorp, s.riedel\}@cs.ucl.ac.uk} \\
\texttt{pushkarmishra@meta.com} \quad \texttt{franuluc@amazon.de} \quad \texttt{p.minervini@ed.ac.uk}
}
\begin{document}
\maketitle        
\begin{abstract}
Factorisation-based Models~(FMs), such as DistMult, have enjoyed enduring success for Knowledge Graph Completion~(KGC) tasks, often outperforming Graph Neural Networks~(GNNs).
However, unlike GNNs, FMs struggle to incorporate node features and generalise to unseen nodes in inductive settings.
Our work bridges the gap between FMs and GNNs by proposing \factorgnns.
This new architecture draws upon \textit{both} modelling paradigms, which previously were largely thought of as disjoint.
Concretely, using a message-passing formalism, we show how FMs can be cast as GNNs by reformulating the gradient descent procedure as message-passing operations, which forms the basis of our \factorgnns.
Across a multitude of well-established KGC benchmarks, our \factorgnns achieve comparable transductive performance to FMs, and state-of-the-art inductive performance while using an order of magnitude fewer parameters.
\end{abstract}

\section{Introduction}
In recent years, machine learning on graphs has attracted significant attention due to the abundance of graph-structured data and developments in graph learning algorithms.
Graph Neural Networks~(GNNs) have shown state-of-the-art performance for many graph-related problems, such as node classification~\citep{kipf2016semi} and graph classification~\citep{gilmer2017neural}.
Their main advantage is that they can easily be applied in an inductive setting: generalising to new nodes and graphs without re-training.
However, despite many attempts at applying GNNs for multi-relational link prediction tasks such as Knowledge Graph Completion~\citep{DBLP:conf/aaai/NickelRP16}, there are still few positive results compared to factorisation-based models~(FMs)~\citep{yang2014embedding, pmlr-v48-trouillon16}.
As it stands, GNNs either -- after resolving reproducibility concerns -- deliver significantly lower performance~\citep{nathani2019learning,sun-etal-2020-evaluation} or yield negligible performance gains at the cost of highly sophisticated architecture designs~\citep{Xu2020Dynamically}.
A notable exception is NBFNet~\citep{zhaochengzhu2021}, but even here improvements come at the price of high computational inference costs compared to FMs. 
Furthermore, it is unclear how NBFNet could incorporate node features, which -- as we will see in this work -- leads to remarkably lower performance in an inductive setting.
On the other hand FMs, despite being a simpler architecture, have been found to be very accurate for knowledge graph completion when coupled with appropriate training strategies~\citep{Ruffinelli2020You} and training objectives~\citep{lacroix2018canonical,chen2021relation}.
However, they also come with shortcomings in that they, unlike GNNs, can not be applied in an inductive setting.

Given the respective strengths and weaknesses of FMs and GNNs, \emph{can we bridge these two seemingly different model categories?} 
While exploring this question, we make the following contributions:
\begin{itemize}[leftmargin=*]
    \item{By reformulating gradient descent on node embeddings using message-passing primitives, we show a practical connection between FMs and GNNs, in that: FMs can be treated as a special instance of GNNs, but with infinite neighbourhood, layer-wise training and a global normaliser.\footnote{The traditional view is that \textit{the transductive nature of FMs stem from their need to retrain on new nodes}, a view which we further underpin by also observing that \textit{FMs are not inductive due to the need for infinite layers of on-the-fly message-passing.}}
    }
    \item{Based on this connection, we propose a new family of architectures, referred to as \factorgnns, which interpolates between FMs and GNNs.
    In essence, \factorgnns inductivise FMs by using a finite number of message-passing layers, and incorporating node features.}
    \item{Through an empirical investigation across 15 well-established inductive and transductive benchmarks, we find \factorgnns achieve state-of-the-art inductive performance and comparable transductive performance to FMs -- despite using an order of magnitude fewer parameters.}
\end{itemize}

\section{Background}
\label{sec:kbc}
Knowledge Graph Completion (KGC)~\citep{DBLP:journals/pieee/Nickel0TG16} is a canonical task of multi-relational link prediction. 
The goal is to predict missing edges given the existing edges in the knowledge graph.
Formally, a knowledge graph contains a set of entities~(nodes) $\mathcal{E}=\{1, \ldots, |\mathcal{E}|\}$, a set of relations~(edge types) $\mathcal{R} = \{1, \ldots, |\mathcal{R}|\}$, and a set of typed edges between the entities $\mathcal{T}=\{(v_i, r_i, w_i)\}_{i=1}^{|\mathcal{T}|}$, where each triplet $(v_i, r_i, w_i)$ indicates a relationship of type $r_i \in \mathcal{R}$ between the \textit{subject} $v_i \in \mathcal{E}$ and the \textit{object} $w_i \in \mathcal{E}$. 
Given a node $v$, we denote its \textit{outgoing} 1-hop neighbourhood as the set of relation-object pairs $\mathcal{N}^1_+[\BV] = \{(r, o) \mid (\BV, r, o) \in \mathcal{T} \}$, its \textit{incoming} 1-hop neighbourhood as the set of subject-relation pairs 
$\mathcal{N}^1_-[\BV] = \{(r, s) \mid (s, r, \BV) \in \mathcal{T} \}$, and $\mathcal{N}^1[v] = \mathcal{N}^1_+[v] \cup \mathcal{N}^1_-[v]$ the union of the two.
We denote the neighbourhood of $v$ under a specific relation $r$ as $\mathcal{N}_{\pm}^1[r, v]$.
Entities may come with features $X\in \mathbb{R}^{|\mathcal{E}| \times K}$ for describing them, such as textual encodings of their names and/or descriptions.
Given a~(training) knowledge graph, KGC is evaluated by answering $(v, r, ?)$-style queries \ie predicting the object given the subject and relation in the triplet. 
And queries like $(?, r, v')$ are answered using the inverse queries $(v', r^{-1}, ?)$ in this work, following~\citep{lacroix2018canonical}.

Multi-relational link prediction models can be trained via maximum likelihood, by fitting a parameterized conditional categorical distribution $P_{\theta}(w \mid v, r)$ over the candidate objects of a relation, given the subject $v$ and the relation type $r$:
\begin{equation}
P_{\theta}(w|\BV, \BR) 
= \frac{ \exp{\Gamma_{\theta}(\BV, \BR, w)} }
       { \sum_{u \in \mathcal{E}} \exp{\Gamma_{\theta}(\BV, \BR, u)} } 
= \mathrm{Softmax}(\Gamma_{\theta}(\BV, \BR, \cdot))[w].
\end{equation}
Here $\Gamma_{\theta}: \mathcal{E} \times \mathcal{R} \times \mathcal{E} \to \mathbb{R}$ is a \textit{scoring function}, which, given a triplet $(v, r, w)$, returns the likelihood that the corresponding edge appears in the knowledge graph.
We illustrate our derivations  using DistMult~\cite{yang2014embedding} as the score function $\Gamma$ and defer extensions to general score functions, e.g. ComplEx~\cite{pmlr-v48-trouillon16}, to the appendix. 
For DistMult, the score function $\Gamma_{\theta}$ is defined as the tri-linear dot product of the vector representations corresponding to the subject, relation, and object of the triplet:
\begin{equation}
\label{eq:distmult}
\Gamma_{\theta}(v,r,w) = \langle f_{\phi}(v), f_{\phi}(w), g_{\psi}(r) \rangle = \sum_{i = 1}^{K} f_{\phi}(v)_{i} f_{\phi}(w)_{i} g_{\psi}(r)_{i},
\end{equation}
where $f_{\phi}: \mathcal{E} \to \mathbb{R}^K$ and $g_{\psi}:\mathcal{R}\to\mathbb{R}^K$ are learnable maps parameterised by $\phi$ and $\psi$ that encode entities and relation types into $K$-dimensional vector representations, and $\theta=(\phi, \psi)$. We will refer to $f$ and $g$ as the entity and relation \emph{encoders}, respectively.
If we define the data distribution as $P_D(x) = \frac{1}{|\mathcal{T}|} \sum_{(v, r, w) \in \mathcal{T}} \delta_{(v,r,w)}(x)$, where $\delta_{(v,r,w)}(x)$ is a Dirac delta function at $(v,r,w)$, then the objective is to learn the model parameters $\theta$ by minimising the expected negative log-likelihood $\mathcal{L}(\theta)$ of the ground-truth entities for the queries $(v, r, ?)$ obtained from $\mathcal{T}$:
\begin{equation} 
\label{eq:objective}
\arg\min_{\theta} \mathcal{L}(\theta) \quad \text{where} \quad
\mathcal{L}(\theta) = - \mathbb{E}_{x \sim P_D} [\log (P_{\theta}(w|v, r)] =
- \frac{1}{|\mathcal{T}|} \sum_{(v, r, w) \in \mathcal{T}}  \log P_{\theta}(w|v, r).
\end{equation}
During inference, we use $P_{\theta}$ for determining the plausibility of links not present in the training graph. 
\subsection{Factorisation-based Models for KGC}
\label{sec:bg_fm}
In factorisation-based models, which we assume to be DistMult, $f_{\phi}$ and $g_{\psi}$ are simply parameterised as look-up tables, associating each entity and relation with a continuous distributed representation:
\begin{equation}
\label{eq:fm_entity_encoder}
f_{\phi}(v) = \phi[v], \; \phi \in \mathbb{R}^{|\mathcal{E}| \times K} \quad \text{and} \quad 
g_{\psi}(r) = \psi[r], \; \psi \in \mathbb{R}^{|\mathcal{R}| \times K}.
\end{equation}

\subsection{GNN-based Models for KGC}
\label{sec:gnn_kbc}
GNNs were originally proposed for node or graph classification tasks~\citep{Gori2005ANM,DBLP:journals/tnn/ScarselliGTHM09}.
To adapt them to KGC, previous work has explored two different paradigms: \emph{node-wise entity representations} \cite{schlichtkrull2018modeling} and \emph{pair-wise entity representations}~\cite{Teru2020InductiveRP,zhaochengzhu2021}.
Though the latter paradigm has shown promising results, it requires computing representations for all pairs of nodes, which can be computationally expensive for large-scale graphs with millions of entities.
Additionally, node-wise representations allow for using a single evaluation of $f_{\phi}(v)$ for multiple queries involving $v$, resulting in fast batch evaluation.

Models based on the first paradigm differ from pure FMs only in the entity encoder and lend themselves well for a fairer comparison with pure FMs.
We will therefore focus on this class and leave the investigation of pair-wise representations to future work.
Let $q_{\phi}:\mathcal{G}\times\mathcal{X} \to \bigcup_{S\in\mathbb{N}^+} \mathbb{R}^{S\times K}$ be a GNN encoder, where
$\mathcal{G} = \{ G \mid G \subseteq \mathcal{E} \times \mathcal{R} \times \mathcal{E} \}$ is the set of all possible multi-relational graphs defined over $\mathcal{E}$ and $\mathcal{R}$, and $\mathcal{X}$ is theinput feature space, respectively.
Then we can set  $f_\phi(v)=q_{\phi}(\mathcal{T}, X)[v]$.
Following the standard message-passing framework ~\cite{gilmer2017neural,battaglia2018relational,grlbook} used by the GNNs, we view $q_{\phi}=q^L \circ ... \circ q^1$ as the recursive composition of $L \in \mathbb{N}^+$ layers that compute intermediate representations $\{h^l\}$ 
for $l\in \{1, \dots, L\}$  
(and $h^0=X$) for all entities in the KG. 
Each layer $q^l$ producing representation $h_l$ is made up of the following three functions: 
\begin{enumerate}
\item A \textit{message function} $q^l_{\mathrm{M}}: \mathbb{R}^K \times \mathcal{R} \times \mathbb{R}^K \to \mathbb{R}^K$ that computes the message along each edge. Given an edge $(v, r, w)\in\mathcal{T}$, $q^l_{\mathrm{M}}$ not only makes use of the node states $h^{l-1}[v]$ and $h^{l-1}[w]$~(as in standard GNNs) but also uses the relation $r$; denote the message as $$m^l[v, r, w] = q^l_{\mathrm{M}}\left(h^{l-1}[v], r, h^{l-1}[w]\right);$$
    \item An \textit{aggregation function} $q^l_{\mathrm{A}}: \bigcup_{S\in\mathbb{N}}\mathbb{R}^{S \times K} \to \mathbb{R}^K$
    that aggregates all messages from the 1-hop neighbourhood of a node; denote the aggregated message as 
$$z^l[v] = q^l_{\mathrm{A}} \left(\{m^l[v, r, w]\; |\; (r,w) \in \mathcal{N}^1[v]\} \right);$$   
\item An \textit{update function} $q^l_{\mathrm{U}}: \mathbb{R}^K \times \mathbb{R}^K \to \mathbb{R}^K$ that produces the new node states $h^l$ by combining previous node states $h^{l-1}$ and the aggregated messages $z^l$: $$h^l[v] = q^l_{\mathrm{U}}(h^{l-1}[v], z^{l}[v]).$$
\end{enumerate}
Different parameterisations of $q^l_{\mathrm{M}}$, $q^l_{\mathrm{A}}$, and $q^l_{\mathrm{U}}$ lead to different GNNs. 
For example, R-GCNs~\citep{schlichtkrull2018modeling} define the $q^l_{\mathrm{M}}$ function using per-relation linear transformations 
$
m^l[v, r, w]= \frac{1}{\mathcal{N}^1[r, v]}W_{r}^lh^{l-1}[w]
$;
$q^l_{\mathrm{A}}$ is implemented by a summation and  $q^l_{\mathrm{U}}$ is a non-linear transformation 
$
h^l[v] = \sigma(z^l[v] + W_0^l h^{l-1}[v])
$, 
where $\sigma$ is the sigmoid function.
For each layer, the learnable parameters are $\{W_{r}^l\}_{r\in\mathcal{R}}$ and $W_0^l$, all of which are matrices in $\mathbb{R}^{K\times K}$.
Sometimes applying GNNs over an entire graph might not be feasible due to the size of the graph. Hence, in practice, $f_{\phi}(v)$ can be approximated with sampled sub-graphs  ~\cite{hamilton2017inductive,ladies2019,graphsaint-iclr20}, such as $L$-hop neighbourhood around node $v$ denoted as $\mathcal{N}^L[v]$:
\begin{equation}
\label{eq:gnn_sub-graph}
f_\phi(v) = q_\phi(\mathcal{T}_{\mathcal{N}^L[v]}, X_{\mathcal{N}^L[v]}) [v].
\end{equation}

\section{Implicit Message-Passing in FMs}
\label{sec:fm_sgd}
The sharp difference in analytical forms might give rise to the misconception that GNNs incorporate message-passing over the neighbourhood of each node~(up to $L$-hops), while FMs do not. 
In this work, we show that by explicitly considering the training dynamics of FMs, we can uncover and analyse the hidden message-passing mechanism within FMs.
In turn, this will lead us to the formulation of a novel class of GNNs well suited for multi-relational link prediction tasks (\cref{sec:refactorgnn}).  
Specifically, we propose to interpret the FMs' optimisation process of their objective \eqref{eq:objective} as the entity encoder.
If we consider, for simplicity, a gradient descent training dynamic, then
\begin{equation}
\label{eq:GD_layers}
f_{\phi^t}(v) = \phi^t[v] 
= \GD^t(\phi^{t-1}, \mathcal{T})[v]
= \underbrace{\GD^t \circ ... \GD^1}_{t}(\phi^0, \mathcal{T})[v],
\end{equation}
where $\phi^{t}$ is the embedding vector at the  $t$-th step,   $t\in\mathbb{N}^+$ is the total number of iterations and $\phi^0$ is a random initialisation.
$\GD$ is the gradient descent operator:
\begin{equation}
\label{eq:theta_gd}
\GD(\phi, \mathcal{T}) = \phi - \alpha \nabla_{\phi}\mathcal{L} = \phi + \alpha \sum_{(v, r, w) \in \mathcal{T}} \frac{\partial \log P(w|v, r)}{\partial \phi},
\end{equation}
where $\alpha=\beta\,|\mathcal{T}|^{-1}$, with a $\eta>0$ learning rate.
We now dissect \cref{eq:theta_gd} in two different~(but equivalent) ways.
In the first, which we dub the \textit{edge view}, we separately consider each addend of the gradient $\nabla_{\phi}\mathcal{L}$.
In the second, we aggregate the contributions from all the triplets to the update of a particular node. 
With this latter decomposition, which we call the \textit{node view}, we can explicate the message-passing mechanism at the core of the FMs. 
While the edge view suits a vectorised implementation better, the node view further exposes the information flow among nodes, allowing us to draw an analogy to message-passing GNNs. 
\subsection{The Edge View}
\label{sec:edge_view}
Each addend of \cref{eq:theta_gd} corresponds to a single edge $(v, r, w)\in\mathcal{T}$ and contributes to the update of the representation of all nodes.
The update on the representation  of the subject $\phi[v]$ is:
\begin{align*}
 \GD(\phi, \{(v, r, w)\})[v] = \phi[v] + \alpha\left(
 \textcolor{black}{
    \underbrace{
    g(r)\odot\phi[w]}_{w \to v}} 
 \textcolor{black}{
    \underbrace{- \sum_{u\in \mathcal{E}}P_\theta(u|v,r)g(r)\odot\phi[u]}_{u \to v}}
 \right).
\end{align*}
The $w \to v$ term indicates information flow from $w$~(a neighbour of $v$) to $v$, increasing the score of the gold triplet $(v, r, w)$.
The $u \to v$ term indicates information flow from global nodes, decreasing the scores of triplets $(v, r, ?)$ with $v$ as the subject and $r$ as the predicate.
Similarly, for the object $w$,
\begin{align*}
\GD(\phi, \{(v, r, w)\})[w]
= \phi[w] + \alpha \textcolor{black}{\underbrace{\left(1 - P_\theta(w|v,r)\right) g(r)\odot\phi[v]}_{v \to w}},
\end{align*}
where, again, the $v \to w$ term indicates information flow from the neighbouring node $v$.
Finally, for the nodes other than $v$ and $w$, we have 
\begin{align*}
\GD(\phi, \{(v, r, w)\})[u] = \phi[u] + \alpha \left(\textcolor{black}{\underbrace{-P_\theta(u|v,r)\phi[v] \odot g(r)}_{v \to u}}
\right). \\
\end{align*}
\subsection{The Node View}
To fully uncover the message-passing mechanism of FMs, we now focus on the gradient descent operation over a single node $v\in\mathcal{E}$, referred to as the \textit{central node} in the GNN literature.
Recalling \cref{eq:theta_gd}, we have:
\begin{equation}
\label{eq:fm_node_view}
\GD(\phi, \mathcal{T})[v] = \phi[v] + \alpha\sum_{(v, r, w) \in \mathcal{T}}\frac{\partial\log P(\GW|\GV, \GR)}{\partial\phi[v]},  
\end{equation}
which aggregates the information stemming from the updates presented in the edge view.
The next theorem describes how this total information flow to a particular node can be recast as an instance of message passing (cf. \cref{sec:gnn_kbc}). We defer the proof to the appendix.

\begin{theorem}[Message passing in FMs] 
\label{the:mp_fm}
The gradient descent operator $\GD$ (\cref{eq:fm_node_view}) on the node embeddings of a DistMult model (\cref{eq:fm_entity_encoder}) with the maximum likelihood objective in \cref{eq:objective} and a multi-relational graph $\mathcal{T}$ defined over entities $\mathcal{E}$ induces a message-passing operator whose composing functions are:  
\begin{align} 
 &q_{\mathrm{M}}(\phi[v], r, \phi[w]) = 
 \left\lbrace
\begin{array}{lr}
\phi[w] \odot  g(r) & \text{if} \; (r,w) \in \mathcal{N}_{+}^1[v], \\
(1 - P_\theta (v|w, r)) 
\phi[w] \odot g(r)  & \text{if} \; (r, w) \in \mathcal{N}_-^1[v];
\end{array}
\right.
\label{eq:fm_qm}
\\
&q_{\mathrm{A}}(\{m[v, r, w]\, :\, (r,w) \in \mathcal{N}^1[v]\}) = \sum_{(r,w) \in \mathcal{N}^1[v]} m[v,r,w]; \\
&q_{\mathrm{U}}(\phi[v], z[v]) = \phi[v] + \alpha z[v] - \beta 
n[v],
\label{eq:fm_qu}
\end{align}
where, defining the sets of triplets $\mathcal{T}^{-v} =
\{
(s, r, o) \in \mathcal{T} \; : \; s\neq v \wedge o\neq v
\}
$, 
\begin{equation}
n[v]= 
\frac{|\mathcal{N}_{+}^{1}[v]|}{|\mathcal{T}|}
\mathbb{E}_{
P_{\mathcal{N}_+^{1}[v]}} \mathbb{E}_{u \sim P_{\theta}(\cdot|v, r)} g(r) \odot \phi[u] 
+
\frac{|\mathcal{T}^{-v}|}{|\mathcal{T}|}
\mathbb{E}_{ P_{\mathcal{T}^{-v}}}P_\theta(v|s, r) g(r) \odot \phi[s] ,
\label{eq:fm_normaliser}
\end{equation}
where $P_{\mathcal{N}^{1}_+[v]}$ and $P_{\mathcal{T}^{-v}}$ are the empirical probability distributions associated to the respective sets.
\end{theorem}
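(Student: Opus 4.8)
The plan is to start from the node-view update in \cref{eq:fm_node_view} and compute the aggregated gradient $\sum_{(\bar v,\bar r,\bar w)\in\mathcal{T}}\partial\log P(\bar w\mid\bar v,\bar r)/\partial\phi[v]$ by partitioning $\mathcal{T}$ according to the role played by the fixed central node $v$. Every triplet falls into exactly one of three classes: those in which $v$ is the subject (equivalently $(r,w)\in\mathcal{N}^1_+[v]$), those in which $v$ is the object (equivalently $(r,w)\in\mathcal{N}^1_-[v]$, i.e. the triplet is $(w,r,v)$), and those in $\mathcal{T}^{-v}$ in which $v$ appears as neither subject nor object. For each class the per-triplet derivative is exactly the one already computed in the edge view of \cref{sec:edge_view}: the subject case contributes $g(r)\odot\phi[w]-\sum_u P_\theta(u\mid v,r)\,g(r)\odot\phi[u]$, the object case contributes $(1-P_\theta(v\mid w,r))\,g(r)\odot\phi[w]$, and the neither case contributes $-P_\theta(v\mid s,r)\,g(r)\odot\phi[s]$. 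The first step is therefore just to quote these three expressions and sum each over its class.

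Next I would split the aggregated sum into local and global parts. The positive part of the subject contributions, $g(r)\odot\phi[w]$ summed over $\mathcal{N}^1_+[v]$, together with the entire object contribution, $(1-P_\theta(v\mid w,r))\,g(r)\odot\phi[w]$ summed over $\mathcal{N}^1_-[v]$, depend only on the 1-hop neighbours of $v$; using the commutativity of $\odot$ to write $g(r)\odot\phi[w]=\phi[w]\odot g(r)$, these are precisely the two branches of $q_{\mathrm M}$ in \cref{eq:fm_qm}, and their sum is the aggregated message $z[v]=\sum_{(r,w)\in\mathcal{N}^1[v]}m[v,r,w]$ produced by $q_{\mathrm A}$. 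What remains is the negative part of the subject contributions, $-\sum_{(r,w)\in\mathcal{N}^1_+[v]}\sum_u P_\theta(u\mid v,r)\,g(r)\odot\phi[u]$, plus the whole neither-class contribution, $-\sum_{(s,r,o)\in\mathcal{T}^{-v}}P_\theta(v\mid s,r)\,g(r)\odot\phi[s]$.

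Finally I would show that these residual terms assemble into $-\beta\,n[v]$. Pulling the common factor $\alpha$ out front and using $\alpha=\beta\,|\mathcal{T}|^{-1}$, the update reads $\phi[v]+\alpha z[v]-\beta\,|\mathcal{T}|^{-1}(\text{residual})$, so it suffices to check that $|\mathcal{T}|^{-1}$ times the two residual sums equals the two summands of \cref{eq:fm_normaliser}. This is a rewriting of each normalised sum as an empirical expectation: multiplying and dividing the first sum by $|\mathcal{N}^1_+[v]|$ turns $|\mathcal{T}|^{-1}\sum_{(r,w)\in\mathcal{N}^1_+[v]}\sum_u P_\theta(u\mid v,r)\,g(r)\odot\phi[u]$ into $\tfrac{|\mathcal{N}^1_+[v]|}{|\mathcal{T}|}\,\mathbb{E}_{P_{\mathcal{N}^1_+[v]}}\mathbb{E}_{u\sim P_\theta(\cdot\mid v,r)}\,g(r)\odot\phi[u]$, and the analogous step with $|\mathcal{T}^{-v}|$ yields the second summand, matching \cref{eq:fm_normaliser} exactly. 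The main obstacle I anticipate is bookkeeping rather than anything analytic: one must verify that the three classes genuinely partition $\mathcal{T}$ with the correct indexing of $\mathcal{N}^1_\pm[v]$, treat the boundary case of self-loops $(v,r,v)$ consistently (where $v$ is at once subject and object), and keep the $\alpha$-versus-$\beta$ normalisation straight so that the softmax gradients land exactly in the expectation form of $n[v]$ rather than off by a factor of $|\mathcal{T}|$.
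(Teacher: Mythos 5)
Your proposal is correct and follows essentially the same route as the paper's proof: a three-way partition of $\mathcal{T}$ by the role of the central node $v$ (subject, object, neither), identification of the neighbourhood-dependent terms with $q_{\mathrm{M}}$ and $q_{\mathrm{A}}$, and rewriting of the remaining softmax and non-incident contributions as the weighted expectations in $n[v]$ via $\alpha = \beta|\mathcal{T}|^{-1}$. The self-loop issue you flag as bookkeeping is handled in the paper simply by assuming no triplets of the form $(v,r,v)$ exist, which is stated at the outset of its proof.
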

What emerges from the equations is that each $\GD$ step contains an explicit information flow from the neighbourhood of each node, which is then aggregated with a simple summation.
Through this direct information path, $t$ $\GD$ steps cover the $t$-hop neighbourhood of $v$.
As $t$ goes towards infinity -- or in practice -- as training converges, FMs capture the global graph structure.
The update function \eqref{eq:fm_qu} somewhat deviates from classic message passing frameworks as $n[v]$ of  \cref{eq:fm_normaliser} involves global information. 
However, we note that we can interpret this mechanism under the framework of augmented message passing \cite{augmented-mp} and, in particular, as an instance of \textit{graph rewiring}.

Based on \cref{the:mp_fm} and \cref{eq:GD_layers}, we can now view $\phi$ as the transient node states $h$~(cf. \cref{sec:gnn_kbc}) and $\GD$ on node embeddings as a message-passing layer.
This dualism sits at the core of the ReFactor GNN model, which we describe next.

\section{\textsc{ReFactor} GNNs}
\label{sec:refactorgnn}
FMs are trained by minimising the objective \eqref{eq:objective}, initialising both sets of parameters~($\phi$ and $\psi$) and performing GD until approximate convergence (or until early stopping terminates the training). 
 The implications are twofold: $i$) the initial value of the entity lookup table $\phi$ does not play any major role in the final model after convergence, and $ii$)  if we introduce a new set of entities, the conventional wisdom is to retrain\footnote{
 Typically until convergence, possibly by partially  warm-starting $\theta$.
 } the model on the expanded knowledge graph. 
 This is computationally rather expensive compared to the ``inductive'' models that require no additional training and can leverage node features like entity descriptions.
However, as we have just seen in~\cref{the:mp_fm}, the training procedure of FMs may be naturally recast as a message-passing operation, which suggests that it is possible to use FMs for inductive learning tasks. 
In fact, we envision that there is an entire novel spectrum of model architectures interpolating between pure FMs and (various instantiations of) GNNs. 
Here we propose one simple implementation of such an architecture which we dub \factorgnns. ~\cref{fig:refactorgnn_architecture} gives an overview of \textsc{ReFactor} GNNs.

\paragraph{The \textsc{ReFactor} Layer}{
A \factorgnn contains $L$ \textsc{ReFactor} layers, that we derive from ~\cref{the:mp_fm}. 
Aligning with the notations in \cref{sec:gnn_kbc}, 
given a KG $\mathcal{T}$ and entity representations $h^{l-1}\in\mathbb{R}^{|\mathcal{E}|\times K}$,
the \textsc{ReFactor} layer computes the representation of a node $v$ as follows:
\begin{equation} 
\label{eq:refactor_layer}
h^l[v] = q^l(\mathcal{T}, h^{l-1})[v] = h^{l-1}[v] - \beta n^l[v] + \alpha \sum_{(r,w) \in \mathcal{N}^1[v]} q_M^l(h^{l-1}[v], r, h^{l-1}[w]),
\end{equation}
where the terms $n^l$ and $q^l_{\mathrm{M}}$ are derived from \cref{eq:fm_normaliser} and \cref{eq:fm_qm}, respectively. 
We note that \factorgnns treat incoming and outgoing neighbourhoods differently instead of treating them equally as in the R-GCN, the first GNN on multi-relational graphs~\citep{schlichtkrull2018modeling}. 

\cref{eq:refactor_layer} describes the full batch setting, which can be expensive if the KG contains many edges.
Therefore, in practice, whenever the graph is big, we adopt a stochastic evaluation of the \textsc{ReFactor} layer by decomposing the evaluation into several mini-batches.
We partition $\mathcal{T}$ into a set of computationally tractable mini-batches.
For each of them, we restrict the neighbourhoods to the subgraph induced by it and readjust the computation of $n^l[v]$ to include only entities and edges present in
it.
We leave the investigation of other stochastic strategies~(e.g. by taking Monte Carlo estimations of the expectations in \cref{eq:fm_normaliser}) to future work.
Finally, we cascade the mini-batch evaluation to produce one full layer evaluation.
}

\paragraph{Training}
The learnable parameters of \factorgnns are the relation embeddings $\psi$, which parameterise $g(r)$ in $q^l_M, l \in [1, L]$. 
Inspired by ~\cite{Fey2021GNNAutoScaleSA,you2020l2}, we learn $\psi$ by layer-wise~(stochastic) gradient descent.
This is in contrast to conventional GNN training, where we need to backpropagate through all the layers.  
A (full-batch) GD training dynamic for $\psi$ can be written  as $\psi_{t+1} = \psi_{t} - \eta \nabla \mathcal{L}_t(\psi_t)$, where $
\mathcal{L}_t(\psi_t) = - |\mathcal{T}|^{-1} \sum_{\mathcal{T}}  \log P_{\psi_t}(w|v,r)$, with:
\begin{equation*}
P_{\psi_t}(w|v, r) = \mathrm{Softmax}(\Gamma(v, r, \cdot))[w],
\qquad 
\Gamma(v, r, w) = \langle h^{t}[v], h^{t}[w], g_{\psi_t}(r) \rangle
\end{equation*}
and the node state update as
\begin{equation}
\begin{split}
    h^{t} = 
    \left\lbrace
    \begin{array}{lr}
        X & \text{if} \; t\bmod L=0 
        \\
        q^{t \bmod L}( \mathcal{T}, h^{t-1})  & \text{otherwise.}
    \end{array} 
    \right.\\ 
\end{split}
\end{equation}
Implementation-wise, such a training dynamic equals to using an external memory for storing historical node states
$h^{t-1}$ akin to the procedure introduced in \cite{Fey2021GNNAutoScaleSA}. 
The cached node states can then be queried to compute $h^t$ using \cref{eq:refactor_layer}.
From this perspective,
we periodically clear \emph{the node state cache} every $L$ full batches to force the model to predict based on on-the-fly $L$-layer message-passing. 
After training, we obtain $\psi^*$ and perform inference by running $L$-layer message-passing with $\psi^*$.
\begin{figure}
\begin{center}
\includegraphics[scale=0.1]{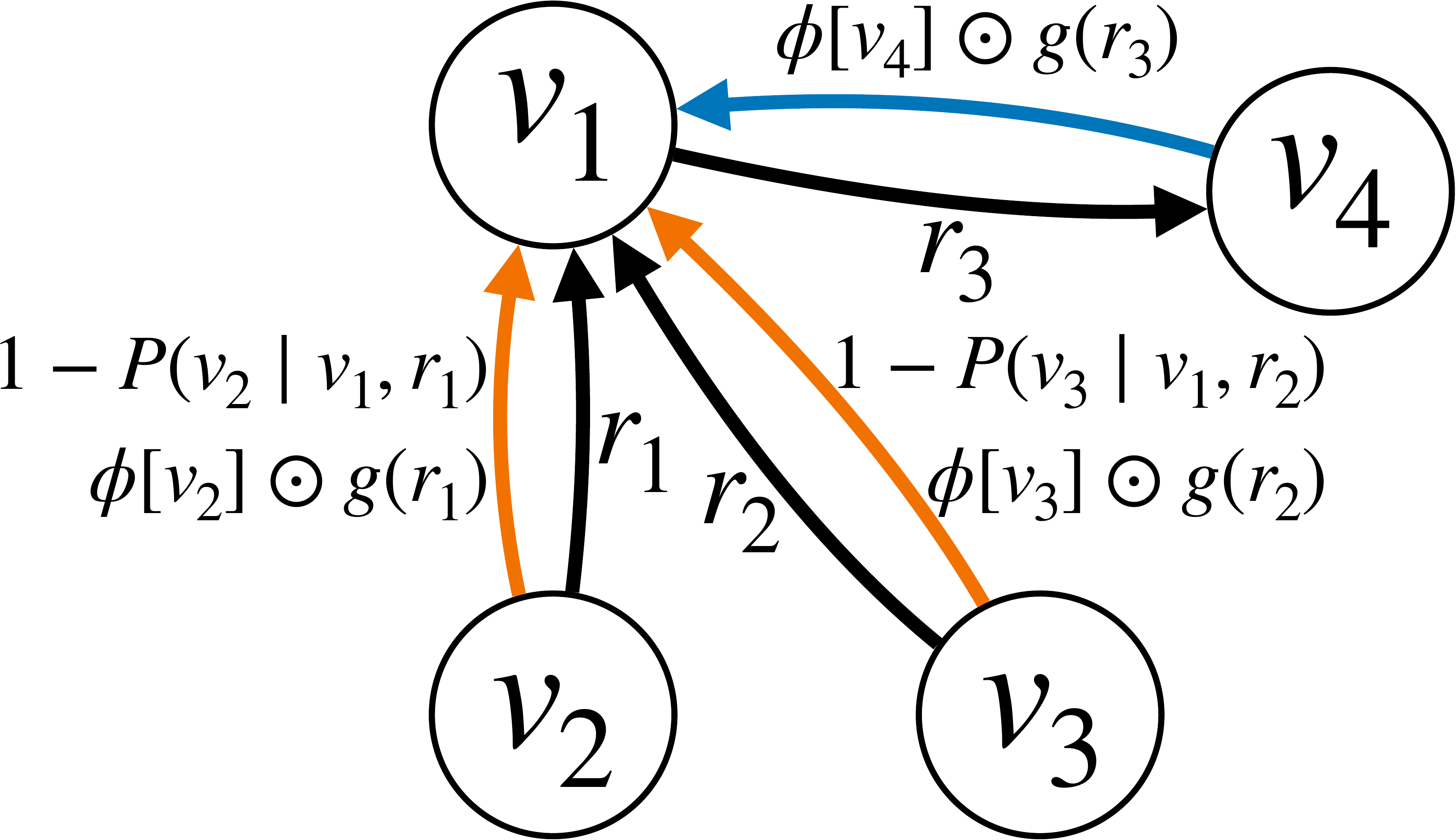}
\hspace{1cm}
\includegraphics[scale=0.1]{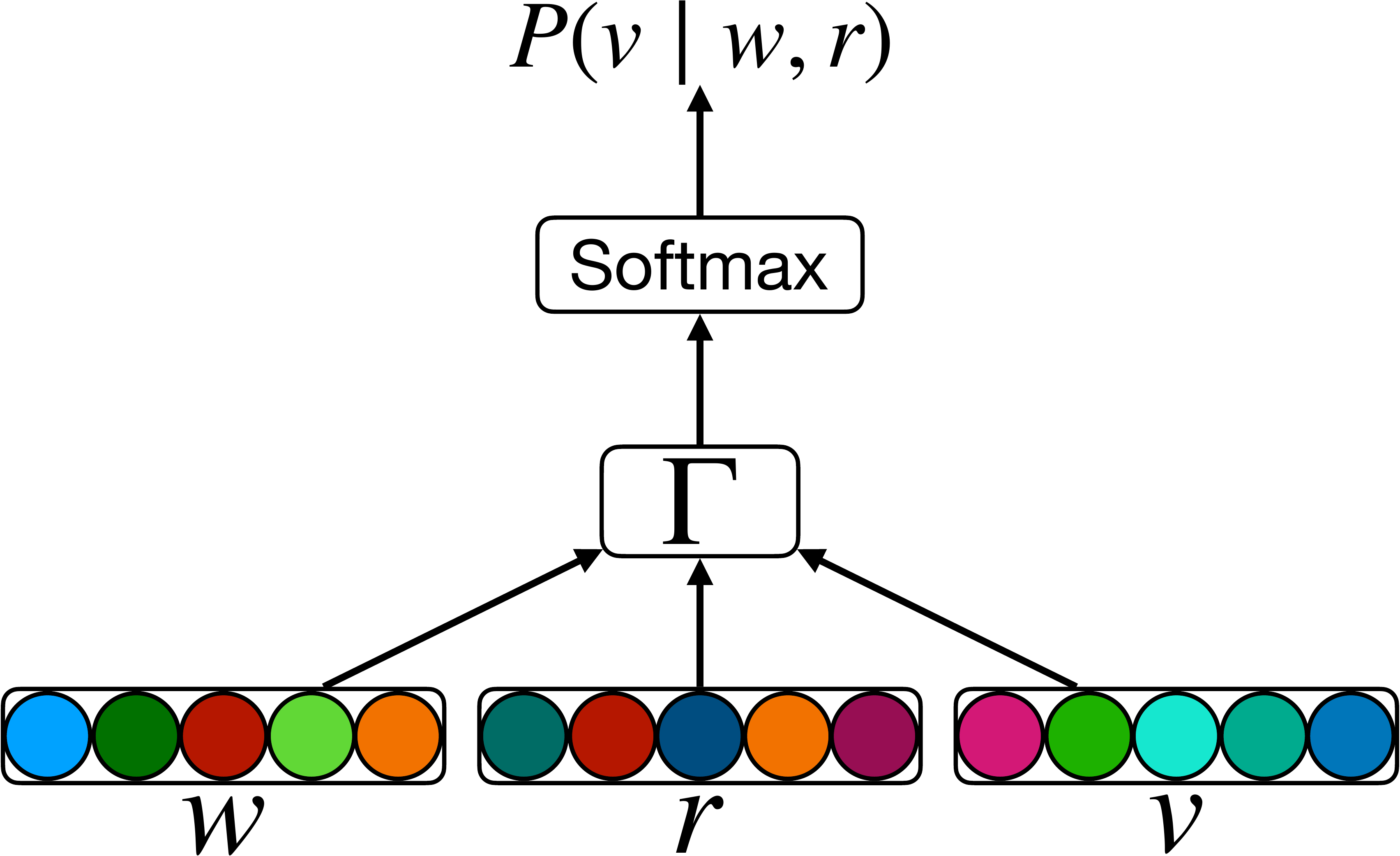}
\end{center}
\caption{ReFactor GNN architecture -- the left figure describes the messages (coloured edges) used to update the representation of node $v_{1}$, which depend on the type of relationship between the sender nodes and $v_{1}$ in the graph $G = \{ (v_2, r_1, v_1), (v_3, r_2, v_1), (v_1, r_3, v_4) \}$; the right figure describes the computation graph for calculating $P(v \mid w, r)$, where $v, w \in \mathcal{E}$ and $r \in \mathcal{R}$: the embedding representations of $w$, $r$, and $v$ are used to score the edge $(w, r, v)$ via the scoring function $\Gamma$, which is then normalised via the $\text{Softmax}$ function.}
\label{fig:refactorgnn_architecture}
\end{figure}
In general, $L$ determines the number of effective message-passing layers in \factorgnns.
A larger $L$ enables \factorgnns to fuse information from more hops of neighbourhoods into the final node representations.
In the meantime, it reduces the inductive applicability of \factorgnns due to over-smoothing and computational requirements.
In the extreme case of $L=\infty$, \textit{where we never clear the node state cache during training}, the final cached node states will be used for inference.
Note that this latter inference regime is inherently transductive since there will be no cached states for new nodes. Future work may explore a more streamlined implementation by simply resetting the entity embeddings periodically.
\section{Experiments}
We perform experiments to answer the following questions regarding \factorgnns:
\begin{itemize}[leftmargin=*]
    \item {\bf Q1.} \factorgnns are derived from a message-passing reformulation of FMs: do they also inherit FMs' predictive accuracy in \emph{transductive} KGC tasks? (\cref{sec:refactorgnn_transductive})
    \item {\bf Q2.} \factorgnns ``inductivise'' FMs. Are they more statistically accurate than other GNN baselines in \emph{inductive} KGC tasks? (\cref{sec:refactorgnn_inductive})
    \item {\bf Q3.} The term $n[v]$ involves nodes that are not in the 1-hop neighbourhood. Is such \emph{augmented message passing}~\citep{augmented-mp} necessary for good KGC performance? (\cref{sec:augmp})
\end{itemize}
For transductive experiments, we used three well-established KGC datasets: \textit{UMLS}~\citep{kemp2006concepts}, \textit{CoDEx-S}~\citep{safavi2020codex}, and \textit{FB15K237}~\citep{toutanova2015observed}. 
For inductive experiments, we used the inductive KGC benchmarks introduced by GraIL~\citep{Teru2020InductiveRP}, which include 12 pairs of knowledge graphs: (\textit{FB15K237\_v$i$}, \textit{FB15K237\_v$i$\_ind}), (\textit{WN18RR\_v$i$}, \textit{WN18RR\_v$i$\_ind}), and (\textit{NELL\_v$i$}, \textit{NELL\_v$i$\_ind}), where $i \in [1, 2, 3, 4]$, and (\textit{\_v$i$}, \textit{\_v$i$\_ind}) represents a pair of graphs with a shared relation vocabulary and non-overlapping entities.
We follow the standard KGC evaluation protocol by fully ranking all the candidate entities and computing two metrics using the ranks of the ground-truth entities: Mean Reciprocal Ranking~(MRR) and Hit Ratios at Top K~(Hits@$K$) with $K \in [1, 3, 10]$.
For the inductive KGC, we additionally consider the partial-ranking evaluation protocol used by GraIL for a fair comparison. 
Empirically, we find full ranking more difficult than partial ranking, and thus more suitable for reflecting the differences among models on GraIL datasets -- we would like to call for future work on GraIL datasets to also adopt a full ranking protocol on these datasets.

Our \textit{transductive} experiments used $L=\infty$, i.e. node states cache is never cleared, as we wanted to see if \factorgnns ($L=\infty$) can reach the performance of the FMs; on the other hand, in our \textit{inductive} experiments, we used \factorgnns with $L \in \{1, 2, 3, 6, 9\}$, since we wanted to test their performances in inductive settings akin to standard GNNs.
We used a hidden size of 768 for the node representations.
All the models are trained using $[128, 512]$ in-batch negative samples and one global negative node for each positive link.
We performed a grid search over the other hyper-parameters and selected the best configuration based on the validation MRR. 
Since training deep GNNs with full-graph message passing might be slow for large knowledge graphs, we follow the literature~\citep{hamilton2017inductive,ladies2019,graphsaint-iclr20} to sample sub-graphs for training GNNs as indicated by \Cref{eq:gnn_sub-graph}.
Considering that sampling on the fly often prevents high utilisation of GPUs, we resort to a two-stage process: we first sampled and serialised sub-graphs around the target edges in the mini-batches; we then trained the GNNs with the serialised sub-graphs. 
To ensure we have sufficient sub-graphs for training the models, we sampled for 20 epochs for each knowledge graph, \ie 20 full passes over the full graph. 
The sub-graph sampler we currently used is LADIES \citep{ladies2019}.

\subsection{\factorgnns for Transductive Learning (Q1)}
\label{sec:refactorgnn_transductive}
\factorgnns are derived from the message-passing reformulation of FMs. 
We expect them to approximate the performance of FMs for transductive KGC tasks.
To verify this, we run experiments on the datasets UMLS, CoDEx-S, and FB15K237.
For a fair comparison, we use \cref{eq:distmult} as the decoder  and consider i) lookup embedding table as the entity encoder, which forms the FM when combined with the decoder (\cref{sec:bg_fm}), and ii) \factorgnns as the entity encoder. 
\factorgnns are trained with $L=\infty$, i.e. we never clear the node state cache.
Since transductive KGC tasks do not involve new entities, the node state cache in \factorgnns can be directly used for link prediction.
\cref{tab:transductive_kbc} summarises the result. 
We observe that \factorgnns achieve a similar or slightly better performance compared to the FM.
This shows that \factorgnns are able to capture the essence of FMs and thus maintain strong at transductive KGC.
\begin{table}
\centering
\begin{tabular}{rccc}
\toprule
{\bf Entity Encoder} & {\bf UMLS} & {\bf CoDEx-S} & {\bf FB15K237} \\
\midrule
R-GCN &  - & 0.33 & 0.25 \\
Lookup (FM, specif. DistMult) & 0.90 & 0.43    & 0.30\\
\factorgnns ($L = \infty$)    & 0.93 & 0.44    & 0.33\\
\bottomrule
\end{tabular}
\caption{Test MRR for transductive KGC tasks. }
\label{tab:transductive_kbc}
\end{table}

\subsection{\factorgnns for Inductive Learning (Q2)}
\label{sec:refactorgnn_inductive}
Despite FMs' good empirical performance on transductive KGC tasks, they fail to be inductive as GNNs.
According to our reformulation, this is due to the infinite message-passing layers hidden in FMs' optimisation.
Discarding infinite message-passing layers, \factorgnns enable FMs to perform inductive reasoning tasks by learning to use a finite set of message-passing layers for prediction similarly to GNNs.

Here we present experiments to verify \factorgnns's capability for inductive reasoning.
Specifically, we study the task of inductive KGC and investigate whether \factorgnns can generalise to unseen entities.
Following \cite{Teru2020InductiveRP}, on GraIL datasets, we trained models on the original graph, and run 0-shot link prediction on the \textit{\_ind} test graph. 
Similar to the transductive experiments, we use \cref{eq:distmult} as the decoder and vary the entity encoder.
We denote three-layer \factorgnns as \texttt{\factorgnns(3)} and six-layer \factorgnns as \texttt{\factorgnns(6)}.
We consider several baseline entity encoders: 
i) \texttt{no-pretrain}, models without any pretraining on the original graph; 
ii) \texttt{GAT(3)}, three-layer graph attention network~\citep{velivckovic2018graph};  
iii) \textit{GAT(6)}, six-layer graph attention network;  
iv) \texttt{GraIL}, a sub-graph-based relational GNN~\cite{Teru2020InductiveRP};  
v) \texttt{NBFNet}, a path-based GNN~\cite{zhaochengzhu2021}, current SoTA on GraIL datasets.
In addition to randomly initialised vectors as the node features, we also used textual node features, RoBERTa~\cite{DBLP:journals/corr/abs-1907-11692} Encodings of the entity descriptions, which are produced by SentenceBERT~\cite{reimers-2019-sentence-bert}.
Due to space reason, we present the results on (\textit{FB15K237\_v$1$}, \textit{FB15K237\_v$1$\_ind}) in ~\cref{fig:inductive_link_prediction}. Results on other datasets are similar and can be found in the appendix. 
We can see that without textual node features, \factorgnns perform better than GraIL (+23\%); with textual node features, \factorgnns outperform both GraIL (+43\%) and NBFNet (+10\%), achieving new SoTA results on inductive KGC.

\begin{figure}
    \centering
    \includegraphics[scale=0.28]{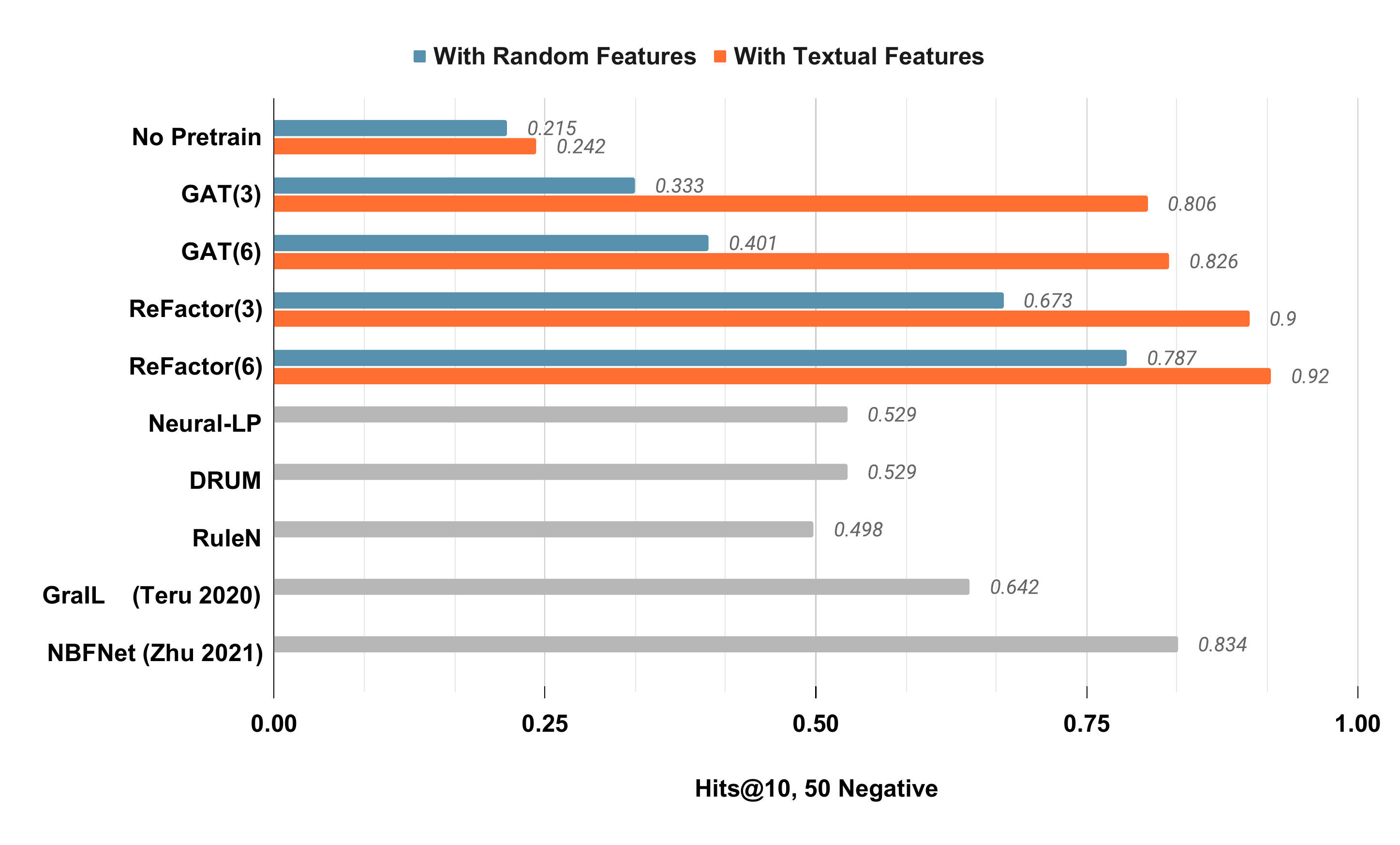}
    \caption{Inductive KGC Performance. Models are trained on the KG \textit{FB15K237\_v1} and tested on another KG \textit{FB15K237\_v1\_ind}, where the entities are completely new. The results of GraIL and NBFNet are taken from ~\cite{zhaochengzhu2021}. The grey bars indicate methods that are not devised to incorporate node features.}
    \label{fig:inductive_link_prediction}
\end{figure}

\paragraph{Performance vs Parameter Efficiency as \#Message-Passing Layers Increases}{Usually, as the number of message-passing layers increases in GNNs, the over-smoothing issue occurs while the computational cost also increases exponentially.
\factorgnns avoid this by layer-wise training and sharing the weights across layers. 
\begin{figure}
    \centering
    \includegraphics[scale=0.27]{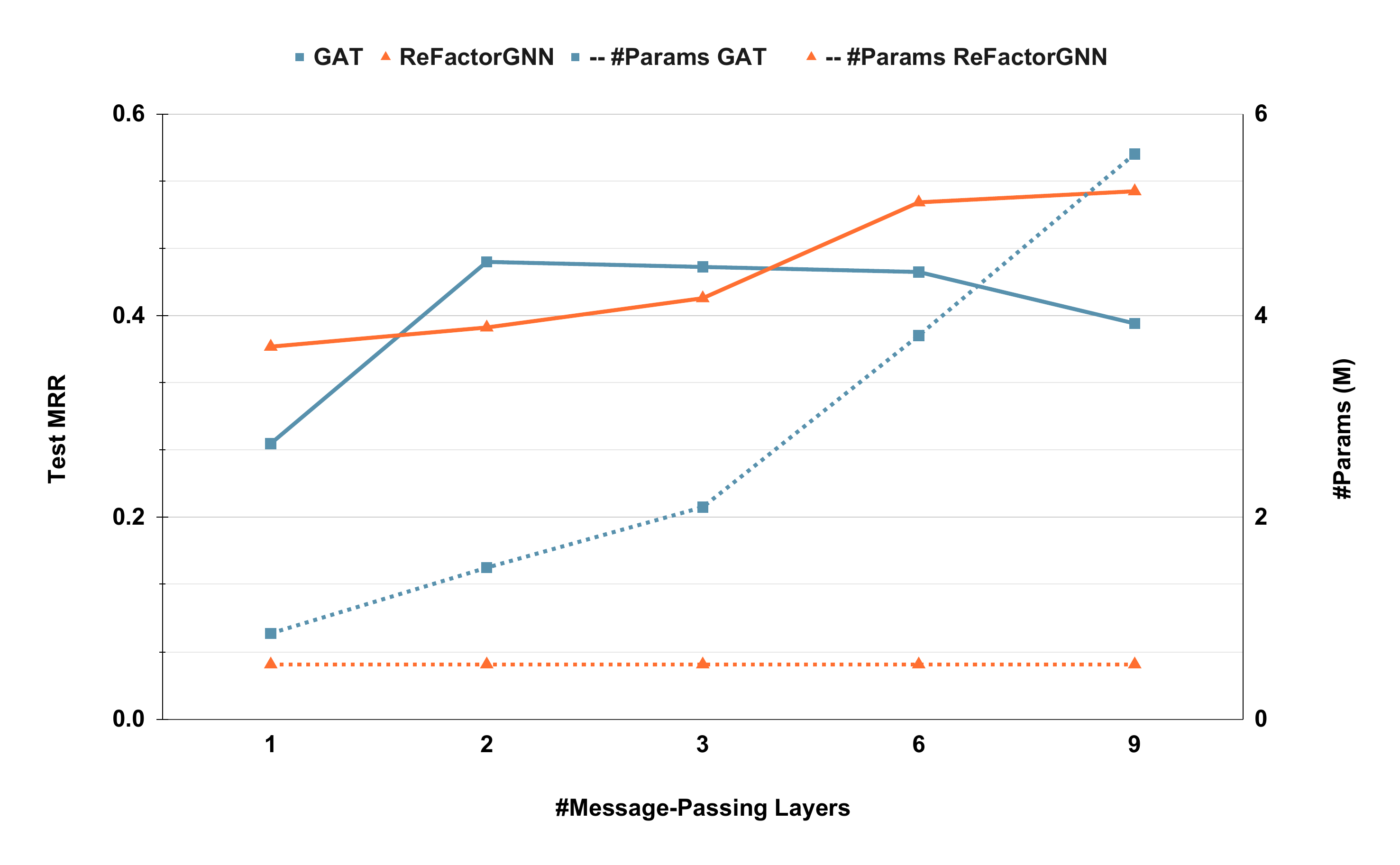}
    \caption{Performance vs Parameter Efficiency as \#Layers Increases on  \textit{FB15K237\_v1}. The left axis is Test MRR while the right axis is \#Parameters. The solid lines and dashed lines indicate the changes of Test MRR and the changes of \#Parameters. }     \label{fig:parameter_efficiency}
\end{figure}
Here we compare \factorgnns with $\{1, 3, 6, 9\}$ message-passing layer(s) with same-depth GATs -- results are summarised in \cref{fig:parameter_efficiency}.
We observe that increasing the number of message-passing layers in GATs does not necessarily improve the predictive accuracy -- the best results were obtained with 3 message-passing layers on \textit{FB15K237\_v1} while using 6 and 9 layers leads to performance degradation. 
On the other hand, \factorgnns obtain consistent improvements when increasing \#Layers from 1 to 3, 6, and 9. \factorgnns$(6, 6)$ and $(9, 9)$ clearly outperform their GAT counterparts.
Most importantly, \factorgnns are more parameter-efficient than GATs, with a constant \#Parameters as \#Layers increases.
}
\subsection{Beyond Message-Passing (Q3)}
\label{sec:augmp}
As shown by \cref{the:mp_fm}, \factorgnns contain not only terms capturing information flow from the 1-hop neighbourhood, which falls into the classic message-passing framework, but also a term $n[v]$ that involve nodes outside the 1-hop neighbourhood. 
The term $n[v]$ can be treated as \emph{augmented message-passing} on a dynamically rewired graph~\cite{augmented-mp}.
Here we perform ablation experiments to measure the impact of the $n[v]$ term.
\cref{tab:ablation_augmp} summarises the ablation results: we can see that, without the term $n[v]$, \factorgnns with random vectors as node features yield a 2\% lower MRR, while \factorgnns with RoBERTa textual encodings as node features produce a 7\% lower MRR.
This suggests that augmented message-passing also plays a significant role in \factorgnns' generalisation properties in downstream link prediction tasks.
Future work might gain more insights by further dissecting the $n[v]$ term.
\begin{table}[tbp]
\centering
\begin{tabular}{rcc}
\toprule
{\bf Test MRR}             & {\bf With Random Features} & {\bf With Textual Features} \\
\midrule
with $n[v]$&  0.425 & 0.486    \\ 
without $n[v]$ & 0.418 & 0.452 \\
\bottomrule
\end{tabular}
\caption{Ablation on $n[v]$ for \factorgnns(6) trained on \textit{FB15K237\_v1}.}
\label{tab:ablation_augmp}
\end{table}

\section{Related Work}
\paragraph{Multi-Relational Graph Representation Learning} 
Multi-relational graph representation learning concerns graphs with various edge types. 
Another relevant line of work would be representation learning over heterogeneous graphs, where node types are also considered.
Previous work on multi-relational graph representation learning focused either on FMs~\citep{DBLP:conf/icml/NickelTK11,pmlr-v48-trouillon16,yang2014embedding,lacroix2018canonical,DBLP:conf/aaai/NickelRP16,DBLP:conf/aaai/DettmersMS018,DBLP:conf/naacl/NguyenNNP18,chen2021relation} or on GNN-based models~\citep{schlichtkrull2018modeling,DBLP:conf/iclr/XuFJXSD20,DBLP:conf/aaai/ZhangZZ0XH20,DBLP:journals/corr/abs-2109-11800}.
Similar to a recent finding in a benchmark study over heterogeneous GNNs~\cite{lv2021hgb}, where the best choices of GNNs for heterogeneous graphs seem to regress to simple homogeneous GNN baselines, the progress of multi-relational graph representation learning also mingles with FMs, the very classic multi-relational link predictors.
Recently, FMs were found to be significantly more accurate than GNNs in KGC tasks, when coupled with specific training strategies~\citep{Ruffinelli2020You,jain2020again,lacroix2018canonical}.
While more advanced GNNs~\cite{zhaochengzhu2021} for KBC are showing promise at the cost of extra algorithm complexity, little effort has been devoted to establishing the links between plain GNNs and FMs, which are strong multi-relational link predictors despite their simplicity.
Our work aims to \emph{align} GNNs with FMs so that we can combine the strengths from both families of models.

\paragraph{Relationships between FMs and GNNs}
A very recent work~\cite{Srinivasan2020On} builds a theoretical link between structural GNNs and node (positional) embeddings.
However, on one end of the link, the second model category encompasses not merely factorisation-based models but also many practical graph neural networks, between which the connection is unknown.
Our work instead offers a more practical link between positional node embeddings produced by FMs and positional node embeddings produced by GNNs, while at the same time focusing on KGC.
Beyond FMs in KGC, using graph signal processing theory, \cite{shen2021powerful} show that matrix factorisation (MF) based recommender models correspond to ideal low-pass graph convolutional filters.
They also find infinite neighbourhood coverage in MF although using a completely different approach and focusing on a different domain in contrast to our work.
\paragraph{Message Passing}
Message-passing is itself a broad terminology -- people generally talk about it under two different contexts. 
Firstly, as a computational technique, message passing allows recursively decomposing a global function into simple local, parallelisable computations~\citep{mackay2003information}, thus being widely used for solving inference problems in a graphical model.
Specifically, we note that message passing-based inference techniques were proposed for matrix completion-based recommendation~\citep{kim2010imp} and Bayesian Boolean data decomposition~\citep{ravanbakhsh2016boolean} in the pre-deep-learning era. 
Secondly, as a paradigm of parameterising learnable functions over \emph{graph-structured data}, message-passing has recently been used to provide a unified reformulation~\citep{gilmer2017neural} for various GNN architectures, including Graph Attention Networks~\citep{velivckovic2018graph}, Gated Graph Neural Networks~\citep{DBLP:journals/corr/LiTBZ15}, and Graph Convolutional Networks~\citep{kipf2016semi}.
In this work, we show that FMs can also be cast as a special type of message-passing GNNs by considering the gradient descent updates~\citep{Bottou2012} over node embeddings as message-passing operations between nodes.
To the best of our knowledge, our work is the first to provide such connections between FMs and message-passing GNNs.
We show that FMs can be seen as instances of GNNs, with a characteristic feature about the nodes being considered during the message-passing process: our \factorgnns can be seen as using an \emph{Augmented Message-Passing} process on a dynamically re-wired graph~\citep{augmented-mp}.

\section{Conclusion \& Future Work}
\paragraph{Conclusion.}
Motivated by the need of understanding FMs and GNNs despite the sharp differences in their analytical forms, our work establishes a link between FMs and GNNs on the task of multi-relational link prediction.
The reformulation of FMs as GNNs addresses the question why FMs are stronger multi-relational link predictors compared to plain GNNs.
Guided by the reformulation, we further propose a new variant of GNNs, \factorgnns, which combines the strengths of both GNNs and classic FMs.
Empirical experiments show that \factorgnns produce significantly more accurate results than our GNN baselines on inductive link prediction tasks.

\paragraph{Limitations.} 
Since we adopted a two-stage (sub-graph serialisation  and then model training) approach instead of online sampling, there can be side effects from the low sub-graph diversity. In our experiments, we used LADIES~\citep{ladies2019} for sub-graph sampling.
Experiments with different sub-graph sampling algorithms, such as GraphSaint~\citep{graphsaint-iclr20} might affect the downstream link prediction results.
Furthermore, it would be interesting to analyse decoders other than DistMult, as well as additional optimisation schemes beyond SGD and AdaGrad.
We do not dive deeper into the expressiveness of \factorgnns. Nevertheless, we offer a brief discussion in \Cref{sec:expressiveness}.

\paragraph{Future Work.}
The most direct future work would be using the insight to develop more sophisticated models at the crossroad between FMs and GNNs, e.g. by further parameterising the message/update function.
One implication from our work is that reformulating FMs as message-passing enables the idea of ``learning to factorize''. 
This might broaden the usages of FMs, going beyond link prediction, to tasks such as graph classification. 
Another implication comes from our approach of unpacking embedding updates into a series of message-passing operations. 
This approach can be generalised to other dot-product-based models that use embedding layers for processing the inputs, e.g. transformers.
In this sense, our work paves the way towards understanding complicated attention-based models e.g. transformers. 
Although transformers can be treated as GNNs over fully-connected graphs, where a sentence would be a graph and its tokens would be the nodes, the message-passing is limited to within each sentence under this view.
We instead envision cross-sentence message-passing by reformulating the updates of the token embedding layer  in transformers.
In general, the direction of organising FMs, GNNs, and transformers under the same framework will allow a better understanding and potentially unifying all of the three models.

\section*{Acknowledgements}
We would like to thank all reviewers for their constructive comments. We thank people with whom we had some really interesting chats throughout the project: Zhaocheng Zhu, Floris Geerts, Muhan Zhang, Patrick Lewis, Hanqing Zeng, Shalini Maiti, Yifei Shen, Yinglong Xia, Matthias Fey, Jing Zhu, Marc Deisenroth and Maximilian Nickel. Yihong would like to thank the UCL-Facebook AI Research joint PhD program for generously funding her PhD.
Pasquale and Pontus were partially funded by the European Union’s Horizon 2020 research and innovation programme under grant agreement no. 875160, and by an industry grant from Cisco.

\bibliography{ref}
\bibliographystyle{plain}

\newpage

\appendix

\section{Theorem 1 Proof}
In this section, we prove Theorem 1, which we restate here for convenience.
\begin{theorem}[Message passing in FMs] 
\label{the:mp_fm_proof}
The gradient descent operator $\GD$ (\ref{eq:fm_node_view}) on the node embeddings of a DistMult model (\cref{eq:fm_entity_encoder}) with the maximum likelihood objective in \cref{eq:objective} and a multi-relational graph $\mathcal{T}$ defined over entities $\mathcal{E}$ induces a message-passing operator whose composing functions are:  
\begin{align} 
 &q_{\mathrm{M}}(\phi[v], r, \phi[w]) = 
 \left\lbrace
\begin{array}{lr}
\phi[w] \odot  g(r) & \text{if} \; (r,w) \in \mathcal{N}_{+}^1[v], \\
(1 - P_\theta (v|w, r)) 
\phi[w] \odot g(r)  & \text{if} \; (r, w) \in \mathcal{N}_-^1[v];
\end{array}
\right.
\\
&q_{\mathrm{A}}(\{m[v, r, w]\, :\, (r,w) \in \mathcal{N}^1[v]\}) = \sum_{(r,w) \in \mathcal{N}^1[v]} m[v,r,w]; \\
&q_{\mathrm{U}}(\phi[v], z[v]) = \phi[v] + \alpha z[v] - \beta 
n[v],
\end{align}
where, defining the sets of triplets $\mathcal{T}^{-v} =\{(s, r, o) \in \mathcal{T} \; : \; s\neq v \wedge o\neq v\}$, 
\begin{equation}
n[v]= 
\frac{|\mathcal{N}_{+}^{1}[v]|}{|\mathcal{T}|}
\mathbb{E}_{
P_{\mathcal{N}_+^{1}[v]}} \mathbb{E}_{u \sim P_{\theta}(\cdot|v, r)} g(r) \odot \phi[u] 
+
\frac{|\mathcal{T}^{-v}|}{|\mathcal{T}|}
\mathbb{E}_{ P_{\mathcal{T}^{-v}}}P_\theta(v|s, r) g(r) \odot \phi[s] ,
\end{equation}
where $P_{\mathcal{N}^{1}_+[v]}$ and $P_{\mathcal{T}^{-v}}$ are the empirical probability distributions associated to the respective sets.
\end{theorem}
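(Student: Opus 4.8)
The plan is to begin from the node-view expression \eqref{eq:fm_node_view} and partition the sum over all training triplets according to the role that the central node $v$ plays in each one. Every triplet $(s,r,o)\in\mathcal{T}$ falls into exactly one of three categories: those in which $v$ is the subject (\ie $(r,o)\in\mathcal{N}^1_+[v]$), those in which $v$ is the object (\ie $(r,s)\in\mathcal{N}^1_-[v]$), and those in which $v$ appears as neither, which is precisely $\mathcal{T}^{-v}$. First I would compute the per-triplet gradient $\partial\log P(o\mid s,r)/\partial\phi[v]$ in each of these three cases; these are exactly the edge-view derivatives already obtained in \cref{sec:edge_view}, read off for the appropriate role of $v$.

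The second step is to separate each per-triplet contribution into a \emph{local} part supported on the 1-hop neighbourhood and a \emph{global} part ranging over all entities. For a triplet with $v$ as subject, the gradient is $\phi[w]\odot g(r) - \sum_{u}P_\theta(u\mid v,r)\,\phi[u]\odot g(r)$; I would assign the first summand to the message from the outgoing neighbour and route the softmax-weighted sum into the global term. For a triplet with $v$ as object, the gradient $(1-P_\theta(v\mid s,r))\,\phi[s]\odot g(r)$ is entirely local and becomes the message from the incoming neighbour $s$ (denoted $w$ in $q_{\mathrm{M}}$), with the probability factor absorbing the self-normaliser contribution. For a triplet in $\mathcal{T}^{-v}$, the gradient $-P_\theta(v\mid s,r)\,\phi[s]\odot g(r)$ is entirely global. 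Collecting the local summands over $\mathcal{N}^1_+[v]\cup\mathcal{N}^1_-[v]=\mathcal{N}^1[v]$ yields exactly $q_{\mathrm{M}}$ together with the summation aggregator $q_{\mathrm{A}}$, while the constant $\alpha=\beta|\mathcal{T}|^{-1}$ multiplying the whole gradient delivers the $\alpha z[v]$ term of $q_{\mathrm{U}}$.

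The final step is to show that the remaining global contributions assemble into $-\beta n[v]$. I would gather the two global pieces, one summed over $\mathcal{N}^1_+[v]$ and one over $\mathcal{T}^{-v}$, factor out $\alpha$, and rewrite each sum as a cardinality times an average. Recognising $\frac{1}{|\mathcal{N}^1_+[v]|}\sum_{(r,w)\in\mathcal{N}^1_+[v]}(\cdot)$ as $\mathbb{E}_{P_{\mathcal{N}^1_+[v]}}(\cdot)$ and likewise for $\mathcal{T}^{-v}$, and identifying $\sum_u P_\theta(u\mid v,r)\,g(r)\odot\phi[u]$ with the inner expectation $\mathbb{E}_{u\sim P_\theta(\cdot\mid v,r)}g(r)\odot\phi[u]$, converts the global part into exactly \cref{eq:fm_normaliser}, with the prefactors $|\mathcal{N}^1_+[v]|/|\mathcal{T}|$ and $|\mathcal{T}^{-v}|/|\mathcal{T}|$ emerging from combining $\alpha=\beta|\mathcal{T}|^{-1}$ with the cardinalities. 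The main obstacle is bookkeeping rather than any deep estimate: I must verify that the three role-based categories partition $\mathcal{T}$ without double counting (handling the corner case of self-loops $(v,r,v)$), and that the choice to fold the object-role normaliser term into the \emph{message} while sending the subject-role normaliser term into $n[v]$ is applied consistently, so that every addend of the original gradient appears exactly once in the final message-passing operator.
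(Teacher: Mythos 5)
Your proposal is correct and follows essentially the same route as the paper's proof: the same three-way partition of $\mathcal{T}$ by the role of $v$ (subject, object, neither), the same split of each per-triplet gradient into a local message part versus a global part, and the same rewriting of the global part as cardinality-weighted empirical expectations to obtain $n[v]$. Your attention to the self-loop corner case matches the paper's explicit standing assumption that no triplets of the form $(v,r,v)$ exist, so nothing is missing.
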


\begin{proof}
Remember that we assume that there are no triplets where the source and the target node are the same~(\ie $(v, r, v)$, with $v \in \mathcal{E}$ and $r \in \mathcal{R}$), and let $v \in \mathcal{E}$ be a node in $\mathcal{E}$.
First, let us consider the gradient descent operator GD over $v$'s node embedding $\phi[v]$:
\begin{equation*}
\GD(\phi, \mathcal{T})[v] = \phi[v] + \alpha\sum_{(\GV, \GR, \GW) \in \mathcal{T}}\frac{\partial\log P(\GW|\GV, \GR)}{\partial\phi[v]}.
\end{equation*}
The gradient is a sum over components associated with the triplets $(\GV,\GR, \GW)\in\mathcal{T}$;
based on whether the corresponding triplet involves $v$ in the subject or object position, or does not involve $v$ at all, these components can be grouped into three categories:
\begin{enumerate}
\item Components corresponding to the triplets where $\GV = v \wedge \GW \neq v$. The sum of these components is given by: 
    \begin{align*}
    \sum_{(v, \GR, \GW) \in \mathcal{T}} \frac{\partial \log P(\GW | v, \GR)}{\partial \phi[v]} &= \sum_{(v, \GR, \GW) \in \mathcal{T}} \left[\frac{\partial \Gamma(v, \GR, \GW)}{\partial \phi[v]} - \sum_u P(u|v, \GR) \frac{\partial\Gamma(v, \GR, u)}{\partial \phi[v]}\right] \\
    &= 
    \textcolor{red}{\sum_{(\GR, \GW) \in \mathcal{N}_{+}^1[v] } \phi[\GW] \odot  g(\GR)} \textcolor{blue}{-  \sum_{(v, \GR, \GW) \in \mathcal{T}} \sum_u P(u|v, \GR) g(\GR) \odot \phi[u]}. 
    \end{align*}
\item Components corresponding to the triplets where $\GV \neq v \wedge \GW = v$. The sum of these components is given by:
    \begin{align*}
    \sum_{(\GV, \GR, v) \in \mathcal{T}} \frac{\partial \log P(v | \GV, \GR)}{\partial \phi[v]} &= \sum_{(\GV, \GR, v) \in \mathcal{T}} \left[\frac{\partial \Gamma(\GV, \GR, v)}{\partial \phi[v]} - \sum_u P(u|\GV, \GR) \frac{\partial\Gamma(\GV, \GR, u)}{\partial \phi[v]}\right]\\
    &= \textcolor{red}{\sum_{(\GV, \GR) \in \mathcal{N}^1_-[v]} g(\GR) \odot \phi[\GV]\left(1 - P(v|\GV, \GR)\right)}.
    \end{align*}
\item Components corresponding to the triplets where $\GV \neq v \wedge \GW \neq v$. The sum of these components is given by:
    \begin{align*}
    \sum_{(\GV, \GR, \GW) \in \mathcal{T}} \frac{\partial \log P(\GW | \GV, \GR)}{\partial \phi[v]} &=
    \sum_{(\GV, \GR, \GW) \in \mathcal{T}} \left[0 - \sum_{u} P(u|\GV, \GR) \frac{\partial \Gamma(\GV, \GR, u)}{\partial \phi[v]}\right] \\ 
    &= \textcolor{blue}{\sum_{(\GV, \GR, \GW) \in \mathcal{T}} - P(v|\GV, \GR) \frac{\partial \Gamma(\GV, \GR, v)}{\partial \phi[v]}}. \\
    &= \textcolor{blue}{\sum_{(\GV, \GR, \GW) \in \mathcal{T}} - P(v|\GV, \GR) g(\GR) \odot \phi[\GV]}.
    \end{align*}
\end{enumerate}
Collecting these three categories, the GD operator over $\phi[v]$, or rather the node representation update in DistMult, can be rewritten as:
\begin{align}
\label{eq:node_view}
\GD(\phi, \mathcal{T})[v] = \phi[v] + \alpha
\textcolor{red}{\underbrace{\sum_{\{(\GR, \GW) \in \mathcal{N}_{+}^1[v] \}} \phi[\GW] \odot  g(\GR)
+ \sum_{(\GR, \GV) \in \mathcal{N}^1_-[v]} \phi[\GV] \odot g(\GR) \left(1 - P(v|\GV, \GR)\right)}_{v\text{'s neighbourhood} \to v}} \\
\textcolor{blue}{\underbrace{-\alpha\sum_{(\GV, \GR, \GW) \in \mathcal{T}, \GV \neq v, \GW \neq v}P(v|\GV, \GR) g(\GR) \odot \phi[\GV]
-  \alpha\sum_{(v, \GR, \GW) \in \mathcal{T}} \sum_u P(u|v, \GR)g(\GR) \odot \phi[u]}_{\text{beyond neighbourhood} \to v}}.
\end{align}
Note that the component ``\textcolor{red}{$v\text{'s neighbourhood} \to v$}''~(highlighted in red) in \cref{eq:node_view} is a sum over $v$'s neighbourhood -- gathering information from positive neighbours $\phi[\GW], (\cdot, \GW) \in \mathcal{N}_+^1[v]$ and negative neighbours $\phi[\GV], (\cdot, \GV) \in \mathcal{N}_-^1[v]$.
Hence, each atomic term of the sum can be seen as a message vector between $v$ and $v$'s neighbouring node.
Formally, letting $w$ be $v$'s neighbouring node, the message vector can be written as follows
\begin{equation}
\begin{split}
m[v, r, w] = q_{\mathrm{M}}(\phi[v], r, \phi[w]) = 
\begin{cases}
\phi[w] \odot  g(r), \text{ if } (r,w) \in \mathcal{N}_{+}^1[v], \\
\phi[w] \odot g(r) (1 - P(v|w, r)), \text{ if } (r,w) \in \mathcal{N}_-^1[v],\\
\end{cases}
\end{split}
\end{equation}
which induces a bi-directional message function $q_M$.
On the other hand, the summation over these atomic terms~(message vectors) induces the aggregate function $q_{\mathrm{A}}$:
\begin{equation}
\label{eq:aggr}
\begin{split}
z[v] &= q_{\mathrm{A}}(\{m[v, r, w]\, :\, (r,w) \in \mathcal{N}^1[v]\}) \\  
     &= \sum_{(\GR,\GW) \in \mathcal{N}_+^1[v]} m^l[v,\GR,\GW] + \sum_{(\GR, \GV) \in \mathcal{N}_-^1[v]} m^l[\GV,\GR, v] = \sum_{(r,w) \in \mathcal{N}^1[v]} m[v,r,w]. \\ 
\end{split}
\end{equation}
Finally, the component ``\textcolor{blue}{$\text{beyond neighbourhood} \to v$}'' (highlighted in blue) is a term that contains dynamic information flow from global nodes to $v$.
If we define: $$n[v] = \frac{1}{|\mathcal{T}|} \sum_{(v, \GR, \GW) \in \mathcal{T}} \sum_u P(u|v, \GR)g(\GR) \odot \phi[u]
+ \frac{1}{|\mathcal{T}|}\sum_{(\GV, \GR, \GW) \in \mathcal{T}, \GV \neq v, \GW \neq v}P(v|\GV, \GR) g(\GR) \odot \phi[\GV],$$
the GD operator over $\phi[v]$ then boils down to an update function which utilises previous node state $\phi[v]$, aggregated message $z[v]$ and a global term $n[v]$ to produce the new node state:
\begin{equation}
\label{eq:update}
\GD(\phi, \mathcal{T})[v] = q_{\mathrm{U}}(\phi[v], z[v]) = \phi[v] + \alpha z[v] - \beta n[v].
\end{equation}
Furthermore, $n[v]$ can be seen as a weighted sum of expectations by recasting the summations over triplets as expectations:
\begin{equation}
\begin{split}
n[v] = \frac{|\mathcal{N}_+^1[v]|}{|\mathcal{T}|}\mathbb{E}_{(v, \GR, \GW) \sim P_{\mathcal{N}_+^1[v]}} \mathbb{E}_{u \sim P(\cdot|v, \GR)} g(\GR) \odot \phi[u]
+ \frac{|\mathcal{T}^{-v}|}{|\mathcal{T}|}\mathbb{E}_{(\GV, \GR, \GW) \sim P_{\mathcal{T}^{-v}}}P(v|\GV, \GR, ) g(\GR) \odot \phi[\GV] \\
\end{split}
\end{equation}
where $\mathcal{T}^{-v} = 
\{(\GV, \GR, \GV') \in \mathcal{T} | \GV \neq v \land \GV' \neq v\}$ is the set of triplets that do not contain $v$.
\end{proof}
\subsection{Extension to AdaGrad and N3 Regularisation}
State-of-the-art FMs are often trained with training strategies adapted for each model category. For example, using an N3 regularizer~\cite{lacroix2018canonical} and AdaGrad optimiser~\cite{JMLR:v12:duchi11a}, which we use for our experiments.
For N3 regularizer, we add a gradient term induced by the regularised loss:
$$\frac{\partial L}{\partial \phi[v]} = \frac{\partial L_{\text{fit}}}{\partial \phi[v]} + \lambda \frac{\partial L_{\text{reg}}}{\partial \phi[v]} = \frac{\partial L_{\text{fit}}}{\partial \phi[v]} + \lambda \text{sign}(\phi[v]) \phi[v]^2$$

where $L_{\text{fit}}$ is the training loss, $L_{\text{reg}}$ is the regularisation term, $\text{sign}(\cdot)$ is a element-wise sign function, and $\lambda \in \mathbb{R}_{+}$ is a hyper-parameter specifying the regularisation strength. The added component relative to this regularizer fits into the message function as follows:
\begin{equation}
\label{eq:message}
\begin{split}
q_{\mathrm{M}}(\phi[v], r, \phi[w]) = 
\begin{cases}
\phi[w] \odot  g(r) - \lambda \text{sign}(\phi[w]) \phi[w]^2, \text{ if } (r,w) \in \mathcal{N}_{+}^1[v], \\
\phi[w] \odot g(r) (1 - P(v|w, r)) - \lambda \text{sign}(\phi[w]) \phi[w]^2 , \text{ if } (w, r) \in \mathcal{N}_-^1[v];\\
\end{cases}
\end{split}
\end{equation}

Our derivation in \cref{sec:fm_sgd} focuses on~(stochastic) gradient descent as the optimiser for training FMs. Going beyond this, complex gradient-based optimisers like AdaGrad use running statistics of the gradients. For example, for an AdaGrad optimiser, the gradient is element-wisely re-scaled by $\frac{1}{\sqrt{s_v} + \epsilon} \nabla_{\phi[v]} L$
where $s$ is the running sum of squared gradients and $\epsilon>0$ is a hyper-parameter added to the denominator to improve numerical stability. Such re-scaling can be absorbed into the update equation:
$$
\text{AdaGrad}(\phi, \mathcal{T})[v] = \phi[v] + (\alpha z[v] - \beta n[v]) * \frac{1}{\sqrt{s[v]} + \epsilon}.
$$
In general, we can interpret any auxiliary variable introduced by the optimizer (e.g. the velocity) as an additional part of the entities and relations representations on which message passing happens. However, the specific equations would depend on the optimizer’s dynamics and would be hard to formally generalise. 

\subsection{Extensions to Other Score Functions e.g. ComplEx}
The two main design choices in \cref{the:mp_fm_proof} are 1) the score function $\Gamma$, and  2) the optimization dynamics over the node embeddings.
In the paper, we chose DistMult and GD because of their mathematical simplicity, leading to easier-to-read formulas.
We can adapt the theorem to general, smooth scoring functions $\Gamma:.\mathcal{E} \times \mathcal{R} \times \mathcal{E} \to \mathbf{R}$ by replacing occurrences of the gradient of DistMult with a generic $\nabla \Gamma$~(the gradient of DistMult w.r.t. $\phi[v]$ at $(v, r, w)$  is simply $g(r) \odot \phi[w]$). 
This gives us the following lemma:

\begin{lemma}[Message passing in FMs] 
The gradient descent operator $\GD$ (\ref{eq:fm_node_view}) on the node embeddings of a general score function with the maximum likelihood objective in \cref{eq:objective} and a multi-relational graph $\mathcal{T}$ defined over entities $\mathcal{E}$ induces a message-passing operator whose composing functions are:  
\begin{align} 
 &q_{\mathrm{M}}(\phi[v], r, \phi[w]) = 
 \left\lbrace
\begin{array}{lr}
\nabla_{\phi[v]} \Gamma(v, r, w) & \text{if} \; (r,w) \in \mathcal{N}_{+}^1[v], \\
(1 - P_\theta (v|w, r)) 
\nabla_{\phi[v]} \Gamma(w, r, v)  & \text{if} \; (r, w) \in \mathcal{N}_-^1[v];
\end{array}
\right.
\\
&q_{\mathrm{A}}(\{m[v, r, w]\, :\, (r,w) \in \mathcal{N}^1[v]\}) = \sum_{(r,w) \in \mathcal{N}^1[v]} m[v,r,w]; \\
&q_{\mathrm{U}}(\phi[v], z[v]) = \phi[v] + \alpha z[v] - \beta 
n[v],
\end{align}
where, defining the sets of triplets $\mathcal{T}^{-v} =\{(s, r, o) \in \mathcal{T} \; : \; s\neq v \wedge o\neq v\}$, 
\begin{equation}
n[v]= 
\frac{|\mathcal{N}_{+}^{1}[v]|}{|\mathcal{T}|}
\mathbb{E}_{
P_{\mathcal{N}_+^{1}[v]}} \mathbb{E}_{u \sim P_{\theta}(\cdot|v, r)} \nabla_{\phi[v]} \Gamma(v, r, u) 
+
\frac{|\mathcal{T}^{-v}|}{|\mathcal{T}|}
\mathbb{E}_{ P_{\mathcal{T}^{-v}}}P_\theta(v|s, r) \nabla_{\phi[v]} \Gamma(s, r, v) ,
\end{equation}
where $P_{\mathcal{N}^{1}_+[v]}$ and $P_{\mathcal{T}^{-v}}$ are the empirical probability distributions associated to the respective sets.
\end{lemma}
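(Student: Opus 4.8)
The plan is to replay the proof of \cref{the:mp_fm_proof} almost verbatim, isolating the one place where the DistMult form is actually used and keeping the gradient symbolic everywhere else. The only DistMult-specific fact exploited in that proof is the closed form $\partial\Gamma(v,r,w)/\partial\phi[v] = g(r)\odot\phi[w]$; every other step is a consequence of the log-softmax gradient identity together with the bookkeeping that partitions the training triplets by the position of the central node $v$. Since neither ingredient depends on the choice of $\Gamma$, the argument transfers directly once we write $\nabla_{\phi[v]}\Gamma$ in place of the explicit Hadamard product.

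First I would record the log-likelihood gradient for the categorical distribution induced by a smooth $\Gamma$:
\[
\frac{\partial\log P(\GW|\GV,\GR)}{\partial\phi[v]} = \frac{\partial\Gamma(\GV,\GR,\GW)}{\partial\phi[v]} - \sum_{u\in\mathcal{E}} P(u|\GV,\GR)\,\frac{\partial\Gamma(\GV,\GR,u)}{\partial\phi[v]},
\]
which holds for any differentiable $\Gamma$ and replaces the explicit DistMult computation. The second ingredient is a locality observation: $\partial\Gamma(a,r,b)/\partial\phi[v]$ vanishes unless $v\in\{a,b\}$, since $\Gamma(a,r,b)$ depends on $\phi$ only through $\phi[a]$ and $\phi[b]$ (using the standing assumption that no self-loops $(v,r,v)$ occur).

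Next I would partition $\mathcal{T}$ into the same three groups as in \cref{the:mp_fm_proof}: triplets with $\GV=v,\GW\neq v$; triplets with $\GV\neq v,\GW=v$; and triplets with $\GV\neq v,\GW\neq v$. Applying the two ingredients to each group, the surviving terms are, respectively: the outgoing-neighbour messages $\nabla_{\phi[v]}\Gamma(v,\GR,\GW)$ together with a global contribution $-\sum_u P(u|v,\GR)\nabla_{\phi[v]}\Gamma(v,\GR,u)$; the incoming-neighbour messages $(1-P(v|\GV,\GR))\nabla_{\phi[v]}\Gamma(\GV,\GR,v)$ after collapsing the $\sum_u$ to its single surviving term $u=v$; and, for the last group, only the global contribution $-P(v|\GV,\GR)\nabla_{\phi[v]}\Gamma(\GV,\GR,v)$. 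Collecting the neighbourhood terms yields $q_{\mathrm{M}}$ and $q_{\mathrm{A}}$, collecting the global terms yields $n[v]$, and recasting the two global summations as empirical expectations over $P_{\mathcal{N}_+^1[v]}$ and $P_{\mathcal{T}^{-v}}$ produces the stated weighted-expectation form, exactly as in the DistMult case.

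The main obstacle, and the only genuine point of care relative to the DistMult proof, is the subject/object asymmetry of a general score function. For DistMult the two gradients $\nabla_{\phi[v]}\Gamma(v,r,w)$ and $\nabla_{\phi[v]}\Gamma(w,r,v)$ coincide, which hides the directional distinction between outgoing and incoming edges; for a generic $\Gamma$ they are different functions, so I must track the argument order faithfully through each group, ensuring that outgoing edges carry $\nabla_{\phi[v]}\Gamma(v,r,\cdot)$ while incoming edges carry $\nabla_{\phi[v]}\Gamma(\cdot,r,v)$. Once this bookkeeping is respected, the remainder of the derivation is mechanical and identical in structure to \cref{the:mp_fm_proof}.
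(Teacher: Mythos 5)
Your proposal is correct and takes essentially the same route as the paper: the paper itself obtains this lemma by replaying the proof of \cref{the:mp_fm_proof} and replacing every occurrence of the DistMult-specific gradient $g(r)\odot\phi[w]$ with the generic $\nabla_{\phi[v]}\Gamma$, which is exactly your plan, down to the same three-way partition of $\mathcal{T}$ and the same collapse of the softmax sum for incoming and non-incident triplets. Your two explicit ingredients---the locality of $\partial\Gamma(a,r,b)/\partial\phi[v]$ and the subject/object argument-order bookkeeping that DistMult's symmetry hides---are, if anything, spelled out more carefully than in the paper's one-sentence adaptation.
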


Accordingly, the node representation updating equations in \cref{sec:edge_view} can be re-written as follows

\begin{align*}
 \GD(\phi, \{(v, r, w)\})[v] = \phi[v] + \alpha\left(
 \textcolor{black}{
    \underbrace{
    \nabla_{\phi[v]} \Gamma(v, r, w)}_{w \to v}} 
 \textcolor{black}{
    \underbrace{- \sum_{u\in \mathcal{E}}P_\theta(u|v,r) \nabla_{\phi[v]} \Gamma(v, r, u)}_{u \to v}}
 \right),
\end{align*}

\begin{align*}
\GD(\phi, \{(v, r, w)\})[w]
= \phi[w] + \alpha \textcolor{black}{\underbrace{\left(1 - P_\theta(w|v,r)\right) \nabla_{\phi[w]} \Gamma(v, r, w)}_{v \to w}},
\end{align*}

\begin{align*}
\GD(\phi, \{(v, r, w)\})[u] = \phi[u] + \alpha \left(\textcolor{black}{\underbrace{-P_\theta(u|v,r)\nabla_{\phi[u]} \Gamma(v, r, u)}_{v \to u}}
\right). \\
\end{align*}

$\nabla_{\phi[\cdot]} \Gamma$ can be different for different models.
For example, here we offer a specific derivation for ComplEx~\citep{pmlr-v48-trouillon16}. 
Let $d=K/2$ be the hidden size for ComplEx. The ComplEx score function is given as follows
\begin{equation}
\begin{split}
\Gamma(v, r, w) &= <\psi[r]_{(0:d)}, \phi[v]_{(0:d)}, \phi[w]_{(0:d)}> + <\psi[r]_{(0:d)}, \phi[v]_{(d:)}, \phi[w]_{(d:)}> \\ 
&+ <\psi[r]_{(d:)}, \phi[v]_{(0:d)}, \phi[w]_{(d:)}> - <\psi[r]_{(d:)}, \phi[v]_{(d:)}, \phi[w]_{(0:d)}>    
\end{split}
\end{equation}
where $(0:d)$ indicates the real part of the complex vector and $(d:)$ indicates the image part of the complex vector.
The gradients of the ComplEx score function with respect to the real/image node representations are given by 
$\frac{\partial \Gamma(v, r, w)}{\partial \phi[v]_{(0:d)}} = \psi[r]_{(0:d)} \odot \phi[w]_{(0:d)} + \psi[r]_{(d:)} \odot \phi[w]_{(d:)},$
$\frac{\partial \Gamma(v, r, w)}{\partial \phi[v]_{(d:)}} = \psi[r]_{(0:d)} \odot \phi[w]_{(d:)} - \psi[r]_{(d:)} \odot \phi[w]_{(0:d)},$
$\frac{\partial \Gamma(v, r, w)}{\partial \phi[w]_{(0:d)}} = \psi[r]_{(0:d)} \odot \phi[v]_{(0:d)} - \psi[r]_{(d:)} \odot \phi[v]_{(d:)},$
$\frac{\partial \Gamma(v, r, w)}{\partial \phi[w]_{(d:)}} = \psi[r]_{(0:d)} \odot \phi[v]_{(d:)} + \psi[r]_{(d:)} \odot \phi[v]_{(0:d)}.$
Concatenating gradients for the real part and the image part, we have the gradients
$$\nabla_{\phi[v]} \Gamma(v, r, w) = \frac{\partial \Gamma(v, r, w)}{\partial \phi[v]_{(0:d)}} \mathbin\Vert \frac{\partial \Gamma(v, r, w)}{\partial \phi[v]_{(d:)}},$$
$$\nabla_{\phi[w]} \Gamma(v, r, w) = \frac{\partial \Gamma(v, r, w)}{\partial \phi[w]_{(0:d)}} \mathbin\Vert \frac{\partial \Gamma(v, r, w)}{\partial \phi[w]_{(d:)}}.$$

\section{Additional Results on Inductive KGC Tasks}
In this paper, we describe the results on FB15K237\_v1\_ind under some random seed.
To confirm the significance and sensitivity, we further experiment with additional 5 random seeds.
Due to our computational budget, for this experiment, we resorted to a coarse grid when performing the hyper-parameters sweeps. 
Following standard evaluation protocols, we report the mean values and standard deviations of the filtered Hits@10 over 5 random seeds. Numbers for Neural-LP, DRUM, RuleN, GraIL, and NBFNet are taken from the literature~\cite{Teru2020InductiveRP, zhaochengzhu2021}. ``-'' means the numbers are not applicable. \cref{tab:ilp_all_50} summarises the results.
\factorgnns are able to make use of both types of input features, while textual features benefit both GAT and \factorgnns for most datasets. Increasing depth benefits WN18RR\_v$i$\_ind ($i \in [1, 2, 3, 4]$) most. Future work could consider the impact of textual node features provided by different pre-trained language models. Another interesting direction is to investigate the impact of depth on GNNs for datasets like WN18RR, where many kinds of hierarchies are observed in the data.

In addition to the \emph{partial ranking} evaluation protocol, where the ground-truth subject/object entity is ranked against 50 sampled entities,\footnote{One implementation for such evaluation can be found in \href{https://github.com/kkteru/grail/blob/master/test_ranking.py\#L448}{GraIL's codebase}.} we also consider the \emph{full ranking} evaluation protocol, where the ground-truth subject/object entity is ranked against all the entities. ~\cref{tab:ilp_all_full} summarises the results. Empirically, we observe that \emph{full ranking} is more suitable for reflecting the differences between models than \emph{partial ranking}. It also has less variance than \emph{partial ranking}, since it requires no sampling from the candidate entities. Hence, we believe there is good reason to recommend the community to use \emph{full ranking} for these datasets in the future.

\afterpage{\clearpage
\begin{landscape}
\begin{table}[tbp]
\caption{Hits@10 with Partial Ranking against 50 Negative Samples. ``[T]'' indicates using textual encodings of entity descriptions~\cite{reimers-2019-sentence-bert} as input (positional) node features; ``[R]'' indicates using frozen random vectors as input (positional) node feature.}
\label{tab:ilp_all_50}
\resizebox{\columnwidth}{!}{%
\begin{tabular}{llllllllllllll}
\hline & \multicolumn{4}{c}{ WN18RR } & \multicolumn{4}{c}{ FB15k-237 } & & \multicolumn{3}{c}{ NELL-995 } \\
& v1 & v2 & v3 & v4 & v1 & v2 & v3 & v4 & v1 & v2 & v3 & v4 \\
\hline 
No Pretrain [R] &  $0.220\tiny{\pm0.048}$ &  $0.226\tiny{\pm0.013}$ &  $0.244\tiny{\pm0.020}$ &  $0.218\tiny{\pm0.050}$ &  $0.215\tiny{\pm0.019}$ &  $0.207\tiny{\pm0.008}$ &  $0.211\tiny{\pm0.002}$ &  $0.205\tiny{\pm0.008}$ &  $0.543\tiny{\pm0.022}$ &  $0.207\tiny{\pm0.008}$ &  $0.216\tiny{\pm0.004}$ &  $0.198\tiny{\pm0.006}$ \\
No Pretrain [T] & $0.267\tiny{\pm0.020}$ &  $0.236\tiny{\pm0.020}$ &  $0.292\tiny{\pm0.025}$ &  $0.253\tiny{\pm0.022}$ &  $0.242\tiny{\pm0.018}$ &  $0.227\tiny{\pm0.007}$ &  $0.240\tiny{\pm0.011}$ &  $0.244\tiny{\pm0.003}$ &  $0.538\tiny{\pm0.079}$ &  $0.234\tiny{\pm0.017}$ &  $0.242\tiny{\pm0.020}$ &  $0.191\tiny{\pm0.036}$ \\
\hline
Neural-LP & $0.744$ & $0.689$ & $0.462$ & $0.671$ & $0.529$ & $0.589$ & $0.529$ & $0.559$ & $0.408$ & $0.787$ & $0.827$ & $0.806$ \\
DRUM & $0.744$ & $0.689$ & $0.462$ & $0.671$ & $0.529$ & $0.587$ & $0.529$ & $0.559$ & $0.194$ & $0.786$ & $0.827$ & $0.8 06$ \\
RuleN & $0.809$ & $0.782$ & $0.534$ & $0.716$ & $0.498$ & $0.778$ & $0.877$ & $0.856$ & $0.535$ & $0.818$ & $0.773$ & $0.614$ \\
\hline
GAT($3$) [R] &  $0.583\tiny{\pm0.022}$ &  $0.797\tiny{\pm0.002}$ &  $0.560\tiny{\pm0.005}$ &  $0.660\tiny{\pm0.015}$ &  $0.333\tiny{\pm0.042}$ &  $0.312\tiny{\pm0.036}$ &  $0.407\tiny{\pm0.072}$ &  $0.363\tiny{\pm0.050}$ &  $0.906\tiny{\pm0.004}$ &  $0.303\tiny{\pm0.031}$ &  $0.351\tiny{\pm0.009}$ &  $0.187\tiny{\pm0.098}$ \\
GAT($6$) [R] &  $0.850\tiny{\pm0.014}$ &  $0.841\tiny{\pm0.001}$ &  $0.631\tiny{\pm0.020}$ &  $0.802\tiny{\pm0.004}$ &  $0.401\tiny{\pm0.020}$ &  $0.445\tiny{\pm0.018}$ &  $0.461\tiny{\pm0.048}$ &  $0.406\tiny{\pm0.143}$ &  $0.811\tiny{\pm0.039}$ &  $0.670\tiny{\pm0.055}$ &  $0.341\tiny{\pm0.042}$ &  $0.301\tiny{\pm0.002}$ \\
GAT($3$) [T] &  $\mathbf{0.970\tiny{\pm0.002}}$ &  $0.980\tiny{\pm0.001}$ &  $0.897\tiny{\pm0.005}$ &  $0.960\tiny{\pm0.001}$ &  $0.806\tiny{\pm0.003}$ &  $0.942\tiny{\pm0.001}$ &  $0.941\tiny{\pm0.002}$ &  $0.954\tiny{\pm0.001}$ &  $0.938\tiny{\pm0.005}$ &  $0.839\tiny{\pm0.001}$ &  $0.962\tiny{\pm0.001}$ &  $0.354\tiny{\pm0.002}$ \\
GAT($6$) [T] &  $0.965\tiny{\pm0.002}$ &  $0.986\tiny{\pm0.001}$ &  $0.920\tiny{\pm0.002}$ &  $0.970\tiny{\pm0.003}$ &  $0.826\tiny{\pm0.004}$ &  $0.943\tiny{\pm0.001}$ &  $0.927\tiny{\pm0.003}$ &  $0.927\tiny{\pm0.001}$ &  $0.904\tiny{\pm0.000}$ &  $0.811\tiny{\pm0.001}$ &  $0.880\tiny{\pm0.001}$ &  $0.297\tiny{\pm0.003}$ \\
GraIL & $0.825$ & $0.787$ & $0.584$ & $0.734$ & $0.642$ & $0.818$ & $0.828$ & $0.893$ & $0.595$ & $0.933$ & $0.914$ & $0.732$ \\
NBFNet & $0.948$ & $0.905$ & $0.893$ & $0.890$ & $0.834$ & $0.949$ & $0.951$ & $0.960$ & - & - & - & - \\
\hline
ReFactorGNN($3$) [R] &  $0.899\tiny{\pm0.003}$ &  $0.842\tiny{\pm0.004}$ &  $0.605\tiny{\pm0.000}$ &  $0.801\tiny{\pm0.002}$ &  $0.673\tiny{\pm0.000}$ &  $0.812\tiny{\pm0.002}$ &  $0.833\tiny{\pm0.003}$ &  $0.877\tiny{\pm0.002}$ &  $0.913\tiny{\pm0.000}$ &  $0.913\tiny{\pm0.011}$ &  $0.893\tiny{\pm0.000}$ &  $0.838\tiny{\pm0.002}$ \\
ReFactorGNN($6$) [R] &  $0.885\tiny{\pm0.000}$ &  $0.854\tiny{\pm0.003}$ &  $0.738\tiny{\pm0.006}$ &  $0.817\tiny{\pm0.004}$ &  $0.787\tiny{\pm0.007}$ &  $0.903\tiny{\pm0.003}$ &  $0.903\tiny{\pm0.002}$ &  $0.920\tiny{\pm0.002}$ &  $0.971\tiny{\pm0.007}$ &  $0.957\tiny{\pm0.003}$ &  $0.935\tiny{\pm0.003}$ &  $0.927\tiny{\pm0.001}$ \\
ReFactorGNN($3$) [T] &  $0.918\tiny{\pm0.002}$ &  $0.973\tiny{\pm0.001}$ &  $0.910\tiny{\pm0.003}$ &  $0.934\tiny{\pm0.001}$ &  $0.900\tiny{\pm0.004}$ &  $0.959\tiny{\pm0.001}$ &  $0.952\tiny{\pm0.002}$ &  $0.968\tiny{\pm0.001}$ &  $\mathbf{0.955\tiny{\pm0.004}}$ &  $0.931\tiny{\pm0.001}$ &  $0.978\tiny{\pm0.001}$ &  $0.929\tiny{\pm0.001}$ \\
ReFactorGNN($6$) [T] &  $\mathbf{0.970\tiny{\pm0.002}}$ &  $\mathbf{0.988\tiny{\pm0.001}}$ &  $\mathbf{0.944\tiny{\pm0.002}}$ &  $\mathbf{0.987\tiny{\pm0.000}}$ &  $\mathbf{0.920\tiny{\pm0.001}}$ &  $\mathbf{0.963\tiny{\pm0.001}}$ &  $\mathbf{0.962\tiny{\pm0.002}}$ &  $\mathbf{0.970\tiny{\pm0.002}}$ &  $0.949\tiny{\pm0.011}$ &  $\mathbf{0.963\tiny{\pm0.001}}$ &  $\mathbf{0.994\tiny{\pm0.000}}$ &  $\mathbf{0.955\tiny{\pm0.002}}$ \\
\bottomrule
\end{tabular}
}
\end{table}
\begin{table}[tbp]
\caption{Hits@10 with Full Ranking against All Candidate Entities. ``[T]'' indicates using textual encodings of entity descriptions~\cite{reimers-2019-sentence-bert} as input (positional) node features; ``[R]'' indicates using frozen random vectors as input (positional) node feature.}
\label{tab:ilp_all_full}
\resizebox{\columnwidth}{!}{%
\begin{tabular}{llllllllllllll}
\hline & \multicolumn{4}{c}{ WN18RR } & \multicolumn{4}{c}{ FB15k-237 } & & \multicolumn{3}{c}{ NELL-995 } \\
& v1 & v2 & v3 & v4 & v1 & v2 & v3 & v4 & v1 & v2 & v3 & v4 \\
\hline 
No Pretrain [R] &  $0.020\tiny{\pm0.006}$ &  $0.004\tiny{\pm0.001}$ &  $0.004\tiny{\pm0.003}$ &  $0.003\tiny{\pm0.001}$ &  $0.013\tiny{\pm0.003}$ &  $0.012\tiny{\pm0.001}$ &  $0.004\tiny{\pm0.001}$ &  $0.002\tiny{\pm0.001}$ &  $0.255\tiny{\pm0.021}$ &  $0.004\tiny{\pm0.001}$ &  $0.001\tiny{\pm0.001}$ &  $0.003\tiny{\pm0.001}$ \\
No Pretrain [T] &  $0.027\tiny{\pm0.009}$ &  $0.007\tiny{\pm0.003}$ &  $0.006\tiny{\pm0.001}$ &  $0.005\tiny{\pm0.001}$ &  $0.014\tiny{\pm0.001}$ &  $0.010\tiny{\pm0.001}$ &  $0.007\tiny{\pm0.001}$ &  $0.006\tiny{\pm0.001}$ &  $0.262\tiny{\pm0.031}$ &  $0.006\tiny{\pm0.002}$ &  $0.006\tiny{\pm0.002}$ &  $0.003\tiny{\pm0.001}$ \\
\hline
GAT($3$) [R] &  $0.171\tiny{\pm0.008}$ &  $0.504\tiny{\pm0.026}$ &  $0.260\tiny{\pm0.022}$ &  $0.089\tiny{\pm0.017}$ &  $0.074\tiny{\pm0.003}$ &  $0.050\tiny{\pm0.014}$ &  $0.051\tiny{\pm0.019}$ &  $0.023\tiny{\pm0.012}$ &  $0.806\tiny{\pm0.019}$ &  $0.003\tiny{\pm0.002}$ &  $0.008\tiny{\pm0.007}$ &  $0.008\tiny{\pm0.004}$ \\
GAT($6$) [R] &  $0.575\tiny{\pm0.005}$ &  $0.698\tiny{\pm0.003}$ &  $0.312\tiny{\pm0.000}$ &  $0.606\tiny{\pm0.002}$ &  $0.048\tiny{\pm0.004}$ &  $0.028\tiny{\pm0.004}$ &  $0.033\tiny{\pm0.018}$ &  $0.015\tiny{\pm0.026}$ &  $0.491\tiny{\pm0.112}$ &  $0.110\tiny{\pm0.048}$ &  $0.031\tiny{\pm0.010}$ &  $0.031\tiny{\pm0.002}$ \\
GAT($3$) [T] &  $0.794\tiny{\pm0.000}$ &  $0.826\tiny{\pm0.000}$ &  $0.468\tiny{\pm0.000}$ &  $0.705\tiny{\pm0.000}$ &  $0.331\tiny{\pm0.000}$ &  $0.585\tiny{\pm0.000}$ &  $0.505\tiny{\pm0.000}$ &  $0.449\tiny{\pm0.000}$ &  $0.856\tiny{\pm0.000}$ &  $0.245\tiny{\pm0.000}$ &  $0.345\tiny{\pm0.000}$ &  $0.078\tiny{\pm0.000}$ \\
GAT($6$) [T] &  $0.815\tiny{\pm0.000}$ &  $0.808\tiny{\pm0.000}$ &  $0.469\tiny{\pm0.000}$ &  $0.701\tiny{\pm0.000}$ &  $0.416\tiny{\pm0.000}$ &  $0.483\tiny{\pm0.000}$ &  $0.391\tiny{\pm0.000}$ &  $0.388\tiny{\pm0.000}$ &  $0.851\tiny{\pm0.000}$ &  $0.189\tiny{\pm0.000}$ &  $0.137\tiny{\pm0.000}$ &  $0.023\tiny{\pm0.000}$ \\
\hline
ReFactorGNN($3$) [R] &  $0.826\tiny{\pm0.000}$ &  $0.758\tiny{\pm0.002}$ &  $0.374\tiny{\pm0.004}$ &  $0.707\tiny{\pm0.000}$ &  $0.455\tiny{\pm0.010}$ &  $0.603\tiny{\pm0.008}$ &  $0.556\tiny{\pm0.003}$ &  $0.587\tiny{\pm0.003}$ &  $0.907\tiny{\pm0.004}$ & $0.700\tiny{\pm0.001}$ &  $0.630\tiny{\pm0.001}$ &  $0.511\tiny{\pm0.001}$ \\
ReFactorGNN($6$) [R] &  $0.826\tiny{\pm0.001}$ &  $0.769\tiny{\pm0.005}$ &  $0.440\tiny{\pm0.001}$ &  $0.731\tiny{\pm0.000}$ &  $0.558\tiny{\pm0.007}$ &  $0.694\tiny{\pm0.006}$ &  $0.639\tiny{\pm0.006}$ &   $0.640\tiny{\pm0.000}$ &  $\mathbf{0.967\tiny{\pm0.005}}$ &  $\mathbf{0.764\tiny{\pm0.009}}$ &  $0.697\tiny{\pm0.005}$ &  $\mathbf{0.703\tiny{\pm0.001}}$ \\
ReFactorGNN($3$) [T] &  $0.805\tiny{\pm0.000}$ &  $0.796\tiny{\pm0.003}$ &  $0.483\tiny{\pm0.000}$ &  $0.682\tiny{\pm0.000}$ &  $0.589\tiny{\pm0.001}$ &  $0.672\tiny{\pm0.001}$ &  $0.610\tiny{\pm0.001}$ &  $0.611\tiny{\pm0.001}$ &  $0.918\tiny{\pm0.000}$ &  $0.629\tiny{\pm0.001}$ &  $0.634\tiny{\pm0.000}$ &  $0.305\tiny{\pm0.000}$ \\
ReFactorGNN($6$) [T] &  $\mathbf{0.844\tiny{\pm0.004}}$ &  $\mathbf{0.848\tiny{\pm0.003}}$ &  $\mathbf{0.522\tiny{\pm0.001}}$ &  $\mathbf{0.781\tiny{\pm0.001}}$ &  $\mathbf{0.619\tiny{\pm0.000}}$ &  $\mathbf{0.721\tiny{\pm0.001}}$ &  $\mathbf{0.663\tiny{\pm0.000}}$ &   $\mathbf{0.660\tiny{\pm0.000}}$ &  $0.913\tiny{\pm0.000}$ &  $0.733\tiny{\pm0.000}$ &  $\mathbf{0.711\tiny{\pm0.000}}$ &  $0.417\tiny{\pm0.000}$ \\
\bottomrule
\end{tabular}
}
\end{table}
\end{landscape}
}

\section{Additional Results on The Impact of Meaningful Node Features}
To better understand the impact that meaningful node features have on \factorgnns for the task of knowledge graph completion, we compare \factorgnns trained with RoBERTa Encodings~(one example of meaningful node features) and \factorgnns trained with Random Vectors~(not meaningful node features).
We perform experiments on \textit{FB15K237\_v1} and vary the number of message-passing layers: $L \in \{3, 6, \infty\}$.
\cref{tab:diff_feature} summarises the differences.
We can see that meaningful node features are highly beneficial if \factorgnns are only provided with a small number of message-passing layers.
As more message-passing layers are allowed, the benefit of \factorgnns diminishes. 
The extreme case would be $L= \infty$, where the benefit of meaningful node features becomes negligible. 
We hypothesise that this might be why meaningful node features haven not been found to be useful for transductive knowledge graph completion.

\begin{table}[tbp]
    \centering
    \begin{tabular}{cccc}
    \toprule
       Depth & 3 & 6 & $\infty$  \\
    $\Delta$ Test MRR & 0.060 & 0.045 & 0.016 \\
    \bottomrule
    \end{tabular}
    \caption{The Impact of Meaningful Node Feature on \textit{FB15K237\_v1}. $\Delta$ Test MRR is computed by \texttt{test mrr (textual node features)} $-$ \texttt{test mrr (random node features)}. Larger $\Delta$ means meaningful node features bring more benefit. }
    \label{tab:diff_feature}
\end{table}

\section{Additional Results on Parameter Efficiency} ~\cref{fig:parameter_efficiency_fb237v2} shows the parameter efficiency on the dataset \textit{FB15K237\_v2}.
\begin{figure}
    \centering
    \includegraphics[scale=0.3]{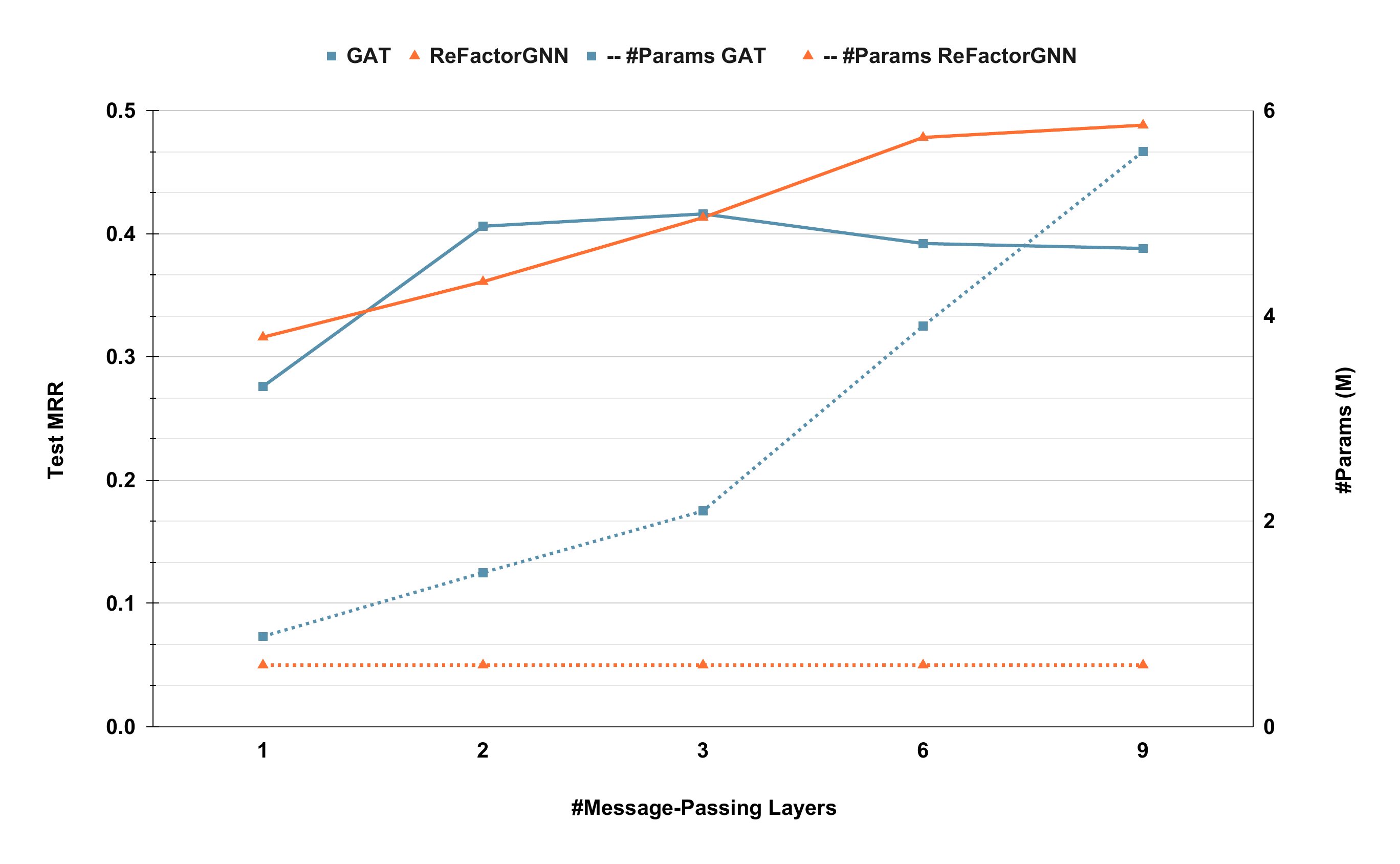}
    \caption{Performance vs Parameter Efficiency as \#Layers Increases on \textit{FB15K237\_v2}. The left axis is Test MRR while the right axis is \#Parameters. The solid lines and dashed lines indicate the changes of Test MRR and the changes of \#Parameters.}     
    \label{fig:parameter_efficiency_fb237v2}
\end{figure}

\section{Discussion on Complexity}
We can analyse the scalability of \factorgnns along three axes, the number of layers $L$, the embedding size $d$, and the number of triplets/edges in the graph $|\mathcal{T}|$.
For scalability w.r.t. to the number of layers, let $L$ denote the number of message-passing layers. Since \factorgnns tie the weights across the layers, the parameter complexity of \factorgnns is $\mathcal{O}(1)$, while it is $\mathcal{O}(L)$ for standard GNNs such as GATs, GCNs, and R-GCNs. Additionally, since \factorgnns adopt layer-wise training enabled via the external memory for node state caching, the training memory footprint is also $\mathcal{O}(1)$ as opposed to $\mathcal{O}(L)$ for standard GNNs. 
For scalability w.r.t the embedding size, let $d$ denote the embedding size. \factorgnns scale linearly with $d$, as opposed to most GNNs in literature where the parameter and time complexities scale quadratically with $d$.
For scalability w.r.t. the number of triplets/edges in the graph, we denote the entity set as $\mathcal{E}$, the relation set as $\mathcal{R}$, and the triplets as $\mathcal{T}$. NBFNet requires $O(LT^2d + L\mathcal{T}Vd^2)$ inference run-time complexity since the message-passing is done for every source node and query relation -- quadratic w.r.t the number of triplets $\mathcal{T}$ while \factorgnns are of linear complexity w.r.t $\mathcal{T}$. 
Extending the complexity analysis in NBFNet~\citep{zhaochengzhu2021} to all the triplets, we include a detailed table for complexity comparison in \cref{tab:complexity}. The inference complexity refers to the cost per forward pass over the entire graph.

\begin{table}[h]
    \centering
\resizebox{\columnwidth}{!}{%
    \begin{tabular}{r|ccccc}
    \toprule
         &  Parameter & Training Memory  & Inference Memory & Training Time & Inference Time\\
         &  Complexity & Complexity & Complexity & Complexity & Complexity\\
    \hline
 GAT     & $O(Ld^2)$    & $O(L|V|d)$ & $O(L|V|d)$ & $O(L|V|d^2 + L|T|d)$   & $O(L|V|d^2 + L|T|d)$ \\
 R-GCN    & $O(L|R|d^2)$ & $O(L|V|d)$ & $O(L|V|d)$ & $O(L|T|d^2 + L|V|d^2)$ & $O(L|T|d^2 + L|V|d^2)$ \\
 NBFNet  & $O(L|R|d^2)$ & $O(L|V||T|d)$ & $O(L|V||T|d)$ & $O(L|T|^2d + L |T||V|d^2)$ & $O(L|T|^2d + L |T||V|d^2)$ \\                    
\factorgnns & $O(|R|d)$ & $O(|V|d)$  & $O(L|V|d)$ & $O(|T||V|d)$ & $O(L|T||V|d)$  \\ 
    \bottomrule
    \end{tabular}
}
    \caption{Complexity Comparison.}
    \label{tab:complexity}
\end{table}

\section{Discussion on Expressiveness of FMs, GNNs and \factorgnns}
\label{sec:expressiveness}
We envision one interesting branch of future work would be a unified framework of expressiveness for all three model categories: FMs, GNNs and \factorgnns.
To the best of our knowledge, there are currently two separate notions of expressiveness, one for FMs and the other for GNNs.
While these two notions of expressiveness are both widely acclaimed within their own communities, it is unclear how to bridge them and produce a new tool supporting the analysis of the empirical applications (\factorgnns) that seam the two communities.

\paragraph{Fully Expressiveness for Adjacency Recovery.}In the FM community, a FM is said to be \textit{fully expressive}~\cite{kazemi2018simple} if, for any given graph $\mathcal{T}$ over entities $\mathcal{E}$ and relations $\mathcal{R}$ , it can fully reconstruct the input adjacency tensor with a embedding size bounded by  $\min(|\mathcal{E}||\mathcal{R}|, |\mathcal{T}| + 1)$. 
We can generalise this expressiveness analysis to the spectrum of FM-GNN models (\factorgnns). In the $L \to \infty$ limit, \factorgnns are as fully expressive as the underlying FMs. In fact, a \factorgnn based on DistMult~\cite{yang2014embedding} is not fully expressive (because of its symmetry); however a \factorgnn based, e.g. on ComplEx~\cite{pmlr-v48-trouillon16,lacroix2018canonical} can reach full expressiveness for $L \to \infty$.

\paragraph{Weisfeiler-Leman Tests for Nodes/Graphs Separation.} For GNNs, established results concern the separation power of induced representations in terms of Weisfeiler-Leman (WL) isomorphism tests~\cite{DBLP:conf/iclr/XuHLJ19,geerts2021expressiveness}. 
However, none of these results is directly applicable to our setting (e.g. they only consider one relationship). 
Nevertheless, if we consider our \factorgnns in a one-relationship, simple graph setting, following the formalism of~\cite{geerts2021expressiveness}, we note that the \textsc{ReFactor} Layer function cannot be written in Guarded Tensor Language since at each layer it computes a global term $n[v]$.
Moreover, \factorgnns only process information coming from two nodes at one time. 
These two facts imply that \factorgnns have a separation power upper bound comparable to the 1-WL test, i.e. comparable to 1-MPNN (not guarded). 

We are not aware of explicit connections between the two above notions of expressiveness. We think there is some possibility that we can bridge them, which itself will be a very interesting research direction, but would require a very substantial amount of additional work and presentation space and is thus beyond the scope of this paper. 

Alternatively, we can also increase the expressiveness of \factorgnns by adding more parameters to the message, aggregation and update operators. For example, introducing additional MLPs to transform the input node features or include non-linearity in the GNN update operator. This would be a natural way to increase the expressiveness of \factorgnns. 

Another method for increasing expressive power for link prediction task only is to extend ReFactor GNNs from node-wise to pairwise (Sec 2.2 in our paper) representations like GraIL~\cite{Teru2020InductiveRP} and NBFNet~\cite{zhaochengzhu2021}, which is more computationally intensive, but yields more powerful as node representations are not standalone but adapted to a specific query.

\section{Experimental Details: Setup, Hyper-Parameters, and Implementation}
As we stated in the experiments section, we used a two-stage training process. In stage one, we sample subgraphs around query links and serialise them. In stage two, we load the serialised subgraphs and train the GNNs. For transductive knowledge graph completion, we test the model on the same graph~(but different splits). For inductive knowledge graph completion, we test the model on the new graph, where the relation vocabulary is shared with the training graph, while the entities are novel. We use the validation split for selecting the best hyper-parameter configuration and report the corresponding test performance. We include reciprocal triplets into the training triplets following standard practice~\cite{lacroix2018canonical}. 

For subgraph serialisation, we first sample a mini-batch of triplets and then use these nodes as seed nodes for sampling subgraphs. We also randomly draw a node globally and add it to the seed nodes. The training batch size is 256 while the valid/test batch size is 8.
We use the LADIES algorithm~\cite{ladies2019} and sample subgraphs with depths in $[1, 2, 3, 6, 9]$ and a width of 256. For each graph, we keep sampling for 20 epochs, i.e. roughly 20 full passes over the graph.

For general model training, we consider hyper-parameters including learning rates in $[0.01, 0.001]$, weight decay values in $[0, 0.1, 0.01]$, and dropout values in $[0, 0.5]$. For GATs, we use 768 as the hidden size and 8 as the number of attention heads. We train GATs with 3 layers and 6 layers. We also consider whether or not to combine the outputs from all the layers. For \factorgnns, we use the same hidden size as GAT. We consider whether the ReFactor Layer is induced by a SGD operator or by a AdaGrad operator. Within a ReFactor Layer, we also consider the N3 regulariser strength values $[0, 0.005, 0.0005]$, the $\alpha$ values $[0.1, 0.01]$, and the option of removing the $n[v]$, where the message-passing layer only involves information flow within 1-hop neighbourhood as most the classic message-passing GNNs do.

We use grid search to find the best hyper-parameter configuration based on the validation MRR.
Each training run is done using two Tesla V100~(16GB) GPUs with, where data parallelism was implemented via the \textit{DistributedDataParallel} component of \emph{pytorch-lightning}.
For inductive learning experiments, inference for all the validation and test queries on small datasets like FB15K237\_v1 takes about 1-5 seconds, while on medium datasets it takes approximately 20 seconds, and on big datasets like WN18RR\_v4 it requires approximately 60 seconds.
For most training runs, the memory footprint is less than 40\%~(13GB).
The training time for 20 full passes over the graph is  about 1, 7, and 21 minutes respectively for small, medium, and large datasets. 

Our code will be available at \href{https://github.com/yihong-chen/ReFactorGNN}{ReFactorGNN}.
We adapted the LADIES subgraph sampler from the GPT-GNN \href{https://github.com/UCLA-DM/GPT-GNN}{codebase}~\cite{gpt_gnn} for sampling on knowledge graphs. 
The datasets we used can be downloaded from the repositories \href{https://github.com/villmow/datasets_knowledge_embedding}{Datasets for Knowledge Graph Completion with Textual Information about Entities} and \href{https://github.com/kkteru/grail}{GraIL - Graph Inductive Learning}.
We implemented \factorgnns using the \href{https://pytorch-geometric.readthedocs.io/en/latest/notes/create_gnn.html}{\textit{MessagePassing}} API in \textit{PyTorch Geometric}. 
Specially, we used \href{https://pytorch-geometric.readthedocs.io/en/latest/notes/sparse_tensor.html}{\textit{message\_and\_aggregate}} function to compute the aggregated messages. Additionally, we note that, a much simplified operator, active forgetting~\citep{NEURIPS2023_6450ea28} can be an alternative way for implementing \factorgnns. Going beyond the discusssion betwen FMs and GNNs, \factorgnns especially in their active forgetting implementation can be seen as a meta-learning algorithm, where the inner level consists of the learning of entity embeddings while the outer loop takes care of the relation embeddings.

\section{Potential Negative Societal Impact}
Our work focus on efficient reasoning over knowledge graphs. A potential negative societal impact is that some people might use the methods for inferring private information using their own collected knowledge graphs.
However, this issue is also commonly faced by any other research work on knowledge graph reasoning.
\end{document}